\documentclass[11pt]{article}

\usepackage{microtype}
\usepackage{graphicx}
\usepackage{subcaption}
\usepackage{booktabs} 

\usepackage[margin=1in]{geometry}
\usepackage[authoryear,round,semicolon]{natbib}

\usepackage{algorithm}
\usepackage{algorithmic}

\usepackage{amsmath}
\usepackage{amssymb}
\usepackage{mathtools}
\usepackage{amsthm}
\usepackage[most]{tcolorbox}

\theoremstyle{plain}
\newtheorem{theorem}{Theorem}[section]
\newtheorem{proposition}[theorem]{Proposition}
\newtheorem{lemma}[theorem]{Lemma}
\newtheorem{corollary}[theorem]{Corollary}
\newtheorem{definition}[theorem]{Definition}
\theoremstyle{definition}
\newtheorem{example}[theorem]{Example}

\theoremstyle{remark}
\newtheorem{remark}[theorem]{Remark}

\usepackage[textsize=tiny]{todonotes}

\definecolor{darkblue}{rgb}{0.0,0.0,0.65}
\definecolor{darkred}{rgb}{0.68,0.05,0.0}
\definecolor{darkgreen}{rgb}{0.0,0.29,0.29}
\definecolor{darkpurple}{rgb}{0.47,0.09,0.29}

\usepackage[backref=page]{hyperref}
\hypersetup{
  colorlinks=true,
  citecolor=darkblue,
  linkcolor=darkred,
  filecolor=darkblue,
  urlcolor=darkblue
}

\usepackage[capitalize,noabbrev]{cleveref}

\title{Adaptive Symmetry Discovery for Dynamical System Identification}

\author{
Behrooz Tahmasebi\footnote{Harvard John A. Paulson School of Engineering and Applied Sciences, Harvard University, Cambridge, MA 02138, USA. Emails: \texttt{\{behrooz\_tahmasebi,mweber\}@seas.harvard.edu}}
\and
Melanie Weber\footnotemark[1]
}

\date{}

\begin{document}

\maketitle

\begin{abstract}
Dynamical systems model trajectory data generated by fixed underlying dynamics, with applications ranging from biology to physics.  Especially in scientific settings, dynamical systems are not generic but often exhibit symmetries imposed by physical laws, formalized through equivariance with respect to group actions. The identification problem concerns recovering the parameters of a system from observed trajectories.  In this work, we study \emph{adaptive symmetry discovery} for dynamical system identification and address how a system can be identified from a single trajectory when it is equivariant with respect to an unknown symmetry group. To this end, we first show that for known symmetries, the system can be identified from a significantly shorter single trajectory than in the generic setting, and we precisely characterize this improvement. 
We then consider the automatic symmetry discovery setting, proposing a method to learn the symmetry group directly from a single trajectory and incorporate it into the identification procedure, achieving the same optimal trajectory length as in the known-symmetry case. Our analysis relies on tools from group representation theory and the expander properties of Cayley graphs, and may be of independent interest for the study of symmetries in dynamical systems.
\end{abstract}

\clearpage

\tableofcontents

\section{Introduction}

The classical scientific approach has long relied on discovering fundamental laws of nature expressed through simple and interpretable equations. The recent abundance of data, together with rapid advances in machine learning and deep learning, has given rise to a complementary scientific paradigm: inferring governing laws and equations directly from data using principled methodologies rather than ad hoc modeling assumptions.

Among many modeling frameworks, dynamical systems constitute one of the most fundamental mathematical formalisms for describing flows, trajectory data, and non-independent evolutionary processes in the sciences, with applications ranging from biological systems to physical flows \citep{brunton2016discovering, yu2024learning, wang2025physics}. The problem of discovering governing equations from data, often formulated through the lens of dynamical systems \citep{brunton2016discovering}, has the potential to uncover previously unknown scientific principles and to significantly accelerate scientific discovery.

To this end, a central task is to learn the parameters of a dynamical system from data (observed trajectories), a problem commonly referred to as \emph{dynamical system identification}. As a well-studied topic in system identification, a wide range of approaches and algorithms have been developed to address this problem across various settings \citep{van2012subspace, isermann2011identification}.

In the context of scientific discovery from data, however, dynamical systems are often not generic. Instead, they are structured and frequently exhibit (potentially unknown) symmetries imposed by physical laws, formalized as equivariance with respect to a (possibly unknown) group action. The field of geometric machine learning studies how and when such symmetries can be exploited to improve learning and generalization \citep{bronstein2021geometric, weber2025geometric}. In this work, we focus on the intersection of geometric machine learning and dynamical system identification.

Motivated by these considerations, we study the problem of \emph{adaptive symmetry discovery} for dynamical system identification. Focusing on single-trajectory data and finite groups, the central question we address is how to identify a system when it is equivariant with respect to an unknown symmetry group, and how to automatically discover and exploit this symmetry to improve sample efficiency, namely by enabling identification from shorter trajectories.

First, we show that when the symmetry group is known, the system can be identified from significantly shorter trajectories than in the generic setting, and we precisely characterize this improvement. Second, we turn to the automatic symmetry discovery setting and propose a method to learn the symmetry group directly from a single trajectory and to incorporate it into the identification procedure. Our approach achieves the same optimal trajectory length as in the known-symmetry case. Consequently, under mild conditions, symmetry discovery incurs negligible overhead, allowing one to fully leverage the benefits of equivariance.

It is worth noting that, while a number of recent studies are closely related to our work (as outlined in the related work section), most existing methods are heuristic or model-specific, and provable quantitative guarantees for symmetry discovery in dynamical systems are lacking. In contrast, our focus is on understanding the theoretical and foundational limits of this problem, which, to the best of our knowledge, have been largely unexplored in the literature.

Finally, we emphasize that the tools used in this paper, drawing from group representation theory and the expansion properties of Cayley graphs for finite groups, introduce techniques that are new to this context and may be of independent interest for the study of symmetries in dynamical systems and beyond.

In short, in this paper we make the following contributions:
\begin{itemize}
    \item We study the problem of automatic symmetry discovery for dynamical system identification, with a focus on adapting to unknown finite group invariances.
    \item We show that symmetries can drastically reduce the trajectory length required to identify a system, and that this benefit can be achieved even when the symmetry is unknown by exploiting a new formulation that enables adaptation to the underlying symmetry.
    \item Our analysis leverages tools from group representation theory and the expander properties of Cayley graphs, which may be of independent interest for the broader study of symmetric and equivariant dynamical systems.
\end{itemize}

\section{Related Work}

Recently, the problem of learning dynamical systems from data has received particular attention, driven by applications across the sciences and physics-guided machine learning \citep{brunton2016discovering, yu2024learning, wang2025physics, sivaranjani2025control}. From a theoretical perspective, characterizing when and how a dynamical system can be identified from data is a classical and well-studied problem in system identification \citep{van2012subspace, isermann2011identification}. More recently, significant effort has focused on learning dynamical systems from noisy observations, a setting that goes beyond, and is substantially more challenging than, the noiseless identification regime. For example, \citet{simchowitz2018learning} study the identification of linear dynamical systems from a single noisy trajectory, with subsequent extensions to nonlinear systems under similar single-trajectory assumptions \citep{foster2020learning, ziemann2022single}.

Beyond this, prior work has also investigated finite-sample identification of linear time-invariant systems \citep{sarkar2019near, sarkar2021finite}, regimes involving multiple trajectories \citep{tu2024learning}, as well as system identification and control over a single trajectory~\citep{fefferman2022optimal,carruth2022controlling,carruth2024almost}. In contrast, the focus of this paper is on the identification of dynamical systems \emph{simultaneously} with symmetry discovery. As a first step toward this goal, we focus on the noiseless single-trajectory setting, which allows us to study the fundamental role of symmetry without additional statistical complications.

Although known symmetries are known to provide both empirical and theoretical benefits in learning and estimation, including improved generalization guarantees \citep{tahmasebi2023exact, tahmasebi2023sample}, in many applications the relevant symmetries exist but are not known \emph{a priori}. In such settings, symmetries must be detected or discovered directly from data, as commonly encountered in the discovery of physical laws governed by differential equations. The research direction of \emph{automatic symmetry discovery} seeks to identify underlying symmetries in a principled manner, moving beyond ad hoc or manually engineered approaches.

A wide range of methods for symmetry discovery have been proposed in the literature. Deep learning approaches for symmetry discovery have been explored in several works \citep{desai2022symmetry, yang2023generative, JMLR:v26:24-2175}, including Lie algebra–based convolutional networks \citep{dehmamy2021automatic}; see also \citep{romero2022learning, ko2024learning, hu2025symmetry}. Methods based on infinitesimal generators for discovering symmetries in nonlinear dynamical data have also been studied \citep{pmlr-v267-hu25o}; see \citep{shaw2025cont} for related approaches.

Beyond these, methods for discovering nonlinear group actions in latent spaces have been proposed \citep{yang2024latent}, including approaches that go beyond affine transformations and address manifold-structured data \citep{shaw2024symmetry, bhat2025atlasd}. Another line of work learns symmetries from layer gradients to relax hard invariance constraints \citep{van2023learning}, while flow matching has recently been used to discover Lie group symmetries \citep{park2025discovering}. For symmetry discovery in differential equations and PDEs, see \citep{kreider2025model, hu2025governing, yang2025discovering, yang2024symmetry}. Other approaches include quadratic-form-based methods \citep{karjol2025learning} and learnable data augmentation strategies \citep{santos2025learning}. For symmetry discovery in finite groups using representation-theoretic tools, see \citep{huh2025a}. Jointly discovering and enforcing symmetries has also been studied \citep{otto2025unified}. Finally, we note that symmetry discovery is fundamentally different from (binary) hypothesis testing for the presence of symmetry in data \citep{soleymani2025robust}.

There has also been recent work on symmetry discovery, specifically in dynamical systems. Data-driven detection of Lie point symmetries for continuous dynamical systems is studied in   \citet{gabel2024data}, while the discovery of finite symmetry groups in dynamical systems is considered in \citet{calvo2025learning, calvo2025machine}. A related approach is proposed in \citet{li2025latent}, where the authors introduce latent mixtures of symmetries to model dynamical systems with multiple symmetric latent components.

Finally, we note that a variety of approaches have been proposed to introduce and exploit symmetries in learning, ranging from canonicalization \citep{kaba2023equivariance, tahmasebi2025regularity, tahmasebigeneralization, shumaylovlie} and frame averaging \citep{punyframe, atzmon2022frame, lin2024equivariance} to data augmentation \citep{tahmasebi2025data}. In contrast, this paper introduces an approach based on generating sets of groups and their expansion properties, with close connections to recent work on approximate symmetry \citep{tahmasebi2025achieving} and learning under invariances \citep{soleymanilearning, soleymani2025from}.

\section{Problem Statement}

We first introduce the basic notation and formalize the class of dynamical systems studied in this paper. A detailed review of the required background is deferred to Appendix~\ref{app_back}.



 
\paragraph{Dynamical systems.}
A (discrete-time) dynamical system is specified by a function
\[
f:\mathbb C^d\to\mathbb C^d,
\]
which governs the state evolution according to
\begin{equation}
    x_{t+1}=f(x_t), \qquad t=0,1,\ldots,
\end{equation}
where \(x_t=(x_t^1,x_t^2,\ldots,x_t^d)^\top\in\mathbb C^d\) denotes the \emph{state} of the system at time \(t\).
We refer to \(f\) as the \emph{dynamics} of the system and assume that \(f\) belongs to a function class \(\mathcal F\).

\paragraph{Trajectories and learning objective.}
Given an initial state \(x_0\in\mathbb C^d\), the dynamics \(f\) generates a \emph{trajectory}
\[
(x_0,x_1,\ldots,x_T)\in\mathbb C^{d\times (T+1)},
\]
where \(T\in\mathbb N\) denotes the number of observed transitions.
In the system identification problem, a learner observes such a trajectory and aims to recover the underlying unknown dynamics \(f\in\mathcal F\).

Clearly, if the function class \(\mathcal F\) is too rich, recovering \(f\) from a finite-length trajectory is impossible. This motivates restricting attention to structured and low-complexity families of dynamics.

\subsection{Feature-lifted linear dynamical systems}

In this work, we study a class of nonlinear dynamics that become linear after a finite-dimensional lifting of the state. Let
\[
\Phi:\mathbb C^d\to \mathbb C^m
\]
be an analytic feature map, and consider dynamics of the form
\begin{equation}
    x_{t+1}=f(x_t)=W\Phi(x_t),
    \qquad
    W\in \mathbb C^{d\times m}.
\end{equation}
We refer to \(\Phi\) as the \emph{feature map} and to \(\mathbb C^m\) as the \emph{feature space}. For convenience, we write
\[
    \phi_t \coloneqq \Phi(x_t)\in \mathbb C^m .
\]
Without loss of generality, we assume that $\Phi$ has linearly independent coordinates as functions of $x \in \mathbb{C}^d$.

\begin{definition}[Feature-lifted linear dynamical systems]
Let \(\Phi:\mathbb C^d\to \mathbb C^m\) be a fixed finite-dimensional feature map. We define
\[
\mathcal F_\Phi
\coloneqq
\left\{
f:\mathbb C^d\to \mathbb C^d
\;\middle|\;
f(x)=W\Phi(x) : \forall ~W\in \mathbb C^{d\times m}
\right\}.
\]
Thus, every system in \(\mathcal F_\Phi\) is nonlinear in the state \(x\) in general, but linear in the lifted features \(\Phi(x)\).
\end{definition}

This modeling assumption encompasses a rich family of nonlinear dynamical systems while retaining a linear parameterization in feature space. The choice of \(\Phi\) determines the expressive power of the class, and later sections characterize how the representation-theoretic structure of the feature space controls the trajectory length needed for identifying the system.

\begin{example}[Polynomial features]
A canonical choice is the polynomial feature map consisting of all monomials of total degree at most \(k \in \mathbb{N}\):
\[
\Phi_{\le k}(x)
=
\Bigl(
(x^{1})^{\alpha_1} (x^{2})^{\alpha_2} \cdots (x^{d})^{\alpha_d}
\Bigr)_{\alpha \in \mathcal{I}_k}
\in \mathbb C^m,
\]
where
$
\mathcal{I}_k
=
\left\{
\alpha \in \mathbb Z_{\ge 0}^d
\;\middle|\;
\sum_{i=1}^d \alpha_i \le k
\right\}.
$
The feature dimension is
$
m=\binom{d+k}{d}.
$
We denote the corresponding class by
$
\mathcal F_{\le k}
\coloneqq
\mathcal F_{\Phi_{\le k}}.
$
This recovers polynomial dynamical systems of total degree at most \(k \in \mathbb{N}\).
\end{example}

\begin{example}[Fourier features]
Another canonical choice is a finite band-limited Fourier feature map. Let
$
\Lambda \subset \mathbb Z^d
$
be a finite set of frequencies, for instance
$
\Lambda=\{w\in\mathbb Z^d:\|w\|_2\le C\}
$ for some $C>0$.
Define
\[
\Phi(x)
=
\left(
e^{2\pi i \langle w,x\rangle}
\right)_{w\in\Lambda}
\in \mathbb C^{|\Lambda|}.
\]
The corresponding class \(\mathcal F = \mathcal F_{\Phi}\) consists of dynamics whose coordinates are trigonometric polynomials supported on \(\Lambda\).
\end{example}

\begin{example}[Linear and affine systems]
Linear and affine dynamical systems are included as special cases of polynomial lifting. Indeed, a linear system
\[
x_{t+1}=W_1x_t,
\qquad W_1\in\mathbb C^{d\times d},
\]
and an affine system
\[
x_{t+1}=W_1x_t+w_0,
\qquad W_1\in\mathbb C^{d\times d},\; w_0\in\mathbb C^d,
\]
belong to \(\mathcal F_{\le 1}\), since \(\Phi_{\le 1}\) contains the constant feature and all coordinate functions.
\end{example}

\subsection{Equivariant dynamics}

We now introduce the notion of symmetry in dynamical systems. Intuitively, a dynamical system is symmetric if transformations applied to its input state induce predictable and consistent transformations of its output state. For example, rotating the input state results in a correspondingly rotated output. Such structure reflects underlying invariances often imposed by physical laws and is pervasive in scientific and engineering applications.

In this work, we formalize symmetry through the notion of \emph{equivariance}. We begin by briefly reviewing the necessary group-theoretic background.

\paragraph{Groups and representations.}
A finite group \(G\) is a finite set equipped with an associative binary operation, an identity element, and inverses for all elements. Common examples include finite rotation groups, reflection groups, and the permutation group on \(n\) elements, consisting of all bijections \(\sigma:[n]\to[n]\) under composition.

A linear representation of a group \(G\) on \(\mathbb C^n\) is a homomorphism \(\rho:G\to \mathrm{GL}_n(\mathbb C)\), assigning to each \(g\in G\) an invertible matrix \(\rho(g)\in\mathbb C^{n\times n}\) such that
\[
\rho(gh)=\rho(g)\rho(h), \qquad \forall g,h\in G.
\]
We refer to \(\rho(g)\) as the action of \(g\) on \(\mathbb C^n\).

\paragraph{Lifted group actions.}
Suppose a finite group \(G\) acts linearly on the state space \(\mathbb C^d\) via a representation \(\rho:G\to\mathrm{GL}_d(\mathbb C)\). We assume that the feature map \(\Phi:\mathbb C^d\to\mathbb C^m\) is compatible with this action, in the sense that there exists a representation \(\rho_\Phi:G\to\mathrm{GL}_m(\mathbb C)\) satisfying
\begin{equation}
    \Phi(\rho(g)x)=\rho_\Phi(g)\Phi(x),
    \qquad \forall g\in G,\; x\in\mathbb C^d.
\end{equation}
Thus, the group action on the state space induces a corresponding linear action on the feature space. This compatibility holds for the canonical feature maps considered in this paper, including polynomial features and finite Fourier features whenever the feature dictionary is closed under the action of \(G\).

We are now ready to define equivariant dynamical systems.

\begin{definition}[Equivariant dynamical systems]
Let \(G\) be a finite group acting on \(\mathbb C^d\) via a representation \(\rho\). A dynamical system \(f:\mathbb C^d\to\mathbb C^d\) is said to be \emph{\(G\)-equivariant} if
\begin{equation}
    f(\rho(g)x)=\rho(g)f(x),
    \qquad \forall g\in G,\; x\in\mathbb C^d.
\end{equation}
For feature-lifted linear dynamical systems \(f(x)=W\Phi(x)\in\mathcal F_\Phi\), \(G\)-equivariance is equivalently enforced by the intertwining condition
\[
    \rho(g)W=W\rho_\Phi(g),
    \qquad \forall g\in G.
\]
We denote by \(\mathcal F_\Phi^G\) the class of \(G\)-equivariant feature-lifted linear dynamical systems with feature map \(\Phi\).
\end{definition}

This condition expresses the commutation of the parameter matrix \(W\) with the group actions on the state and feature spaces, respectively.

\begin{remark}
Throughout the paper, we use the terms \emph{symmetric} and \emph{equivariant} interchangeably when referring to dynamical systems. The term equivariant is used when emphasizing the underlying group action.
\end{remark}

\subsection{Identifiability from a single trajectory}

We now formalize the notion of identifiability.

\begin{definition}[Generic identifiability]
Fix a feature map \(\Phi:\mathbb C^d\to\mathbb C^m\), a finite group \(G\), and consider the class \(\mathcal F_\Phi^G\). The dynamics within this class are said to be \emph{generically identifiable from trajectories of length \(T\in\mathbb N\)} if, for almost all initial states \(x_0\in\mathbb C^d\) and almost all \(G\)-equivariant parameter matrices \(W\in\mathbb C^{d\times m}\), the corresponding trajectory
\[
(x_0,x_1,\ldots,x_T)\in\mathbb C^{d\times (T+1)}
\]
uniquely determines \(W\). The minimal such \(T\) is denoted by \(T_\Phi(G)\).
\end{definition}

Here, ``generic'' is understood in the standard sense: identifiability holds outside a set of measure zero in the space of initial states and \(G\)-equivariant parameters. This notion allows us to exclude pathological configurations while retaining full generality.

When identifiability holds for trajectories of length \(T\), we informally refer to \(T\) as the \emph{sample complexity} of the identification problem, since each transition provides one observation of the dynamics.

\begin{remark}
The notation \(T_\Phi(G)\) emphasizes that the trajectory length depends not only on the symmetry group \(G\), but also on the feature map \(\Phi\). Later, we characterize this dependence through the representation-theoretic decomposition of the feature space and the generic excitation rank induced by \(\Phi\).
\end{remark}

\paragraph{Symmetry discovery.}
Consider an unknown \(G\)-equivariant dynamical system
\(f:\mathbb C^d\to\mathbb C^d\), where the underlying symmetry group
\(G\) is unknown. We assume, however, that \(G\) belongs to a known class of
admissible finite groups \(\mathcal{G}\). While the specific group
\(G \in \mathcal{G}\) is not known a priori, prior knowledge of the class
\(\mathcal{G}\) provides structural information that can be leveraged for
system identification.

We are primarily interested in settings where \(\mathcal{G}\) consists of
relatively large finite groups, as such symmetries can lead to substantial
reductions in the trajectory length required for identifiability. At the
same time, the unknown identity of \(G\) introduces a nontrivial challenge,
as the learner must simultaneously identify the dynamics and discover the
underlying symmetry.

The goal of \emph{adaptive symmetry discovery} for dynamical system
identification is to recover the dynamics \(f\) from short trajectories,
using only the assumption that \(f\in\mathcal F_\Phi\) is equivariant with respect to some
unknown group \(G \in \mathcal{G}\). In the ideal case, the best sample
complexity one could hope to achieve is
\[
    T_\Phi(\mathcal{G}) \coloneqq \max_{G \in \mathcal{G}} T_\Phi(G),
\]
corresponding to the worst-case identifiability threshold over the class
\(\mathcal{G}\).

A further challenge arises from computational considerations. Finite
groups of interest are often prohibitively large. For example, the permutation group on \(n\) elements has cardinality
\(n! \approx \exp(n \log n)\), as do many of its subgroups.
Consequently, algorithms whose runtime scales linearly with the group size
are infeasible, and one must instead aim for procedures with runtime at most 
polylogarithmic in the group size.
Accordingly, our aim is to address the following question:
\begin{tcolorbox}
\emph{Given a class of groups $\mathcal{G}$ and an unknown dynamical system
$f:\mathbb{C}^d \to \mathbb{C}^d$ with $f \in \mathcal{F}_\Phi$ that is
$G$-equivariant for an unknown $G \in \mathcal{G}$, can one identify the
dynamics from trajectories of length $T_\Phi(\mathcal{G})$ with efficient
computational runtime?}
\end{tcolorbox}

We answer this question affirmatively in the next section.

\section{Main Results}
\label{sec:main_results}

We first characterize the trajectory length required for equivariant system
identification when the symmetry group \(G\) is known, and then turn to
adaptive symmetry discovery.

\subsection{Sample complexity of equivariant identification}
\label{sec:equivariant_identification}

We study \(T_\Phi(G)\), the minimal trajectory length required to generically
identify a \(G\)-equivariant system in \(\mathcal F_\Phi^G\), when \(G\) and
\(\Phi\) are known. We briefly recall the required notation; see Appendix~\ref{app_back}.

\paragraph{Irreducible representations.}
Let \(\widehat G\) denote the set of equivalence classes of irreducible
representations of \(G\). After a change of basis, the state and feature spaces
decompose as
\[
\mathbb C^d
\cong
\bigoplus_{\pi\in\widehat G}\mathbb C^{n_\pi}\otimes V_\pi,
\qquad
\mathbb C^m
\cong
\bigoplus_{\pi\in\widehat G}\mathbb C^{m_\pi}\otimes V_\pi,
\]
where \(d_\pi=\dim V_\pi\), and \(n_\pi,m_\pi \in \mathbb{N} \) denote the multiplicities of
\(\pi\) in the state and feature representations. Thus
\(d=\sum_\pi n_\pi d_\pi\) and \(m=\sum_\pi m_\pi d_\pi\).

Under this decomposition, every \(G\)-equivariant matrix
\(W:\mathbb C^m\to\mathbb C^d\) decomposes as
\[
W=
\bigoplus_{\pi\in\widehat G} C_\pi\otimes I_{V_\pi},
\qquad
C_\pi\in\mathbb C^{n_\pi\times m_\pi}.
\]
Hence identifying \(W\) reduces to identifying \(C_\pi\) for all
\(\pi\) with \(n_\pi>0\).

\paragraph{Generic excitation rank.}
The required trajectory length also depends on how \(\Phi\) excites the
different isotypic components. The \(\pi\)-isotypic component of \(\Phi(x)\)
belongs to \(\mathbb C^{m_\pi}\otimes V_\pi\). Fixing bases
\(\{v_i\}_{i=1}^{m_\pi}\) and \(\{u_j\}_{j=1}^{d_\pi}\), we write it as
\[
\sum_{i=1}^{m_\pi}\sum_{j=1}^{d_\pi}
[\Phi_\pi(x)]_{ij}\,v_i\otimes u_j,
\]
thereby identifying it with
\(\Phi_\pi(x)\in\mathbb C^{m_\pi\times d_\pi}\). Thus, the rows index the
\(m_\pi\) copies of \(\pi\), while the columns correspond to coordinates in
\(V_\pi\).

For a trajectory \(x_0,\ldots,x_T\), define
\[
\mathbf{\Phi}_{\pi,T}
\coloneqq
[\Phi_\pi(x_0),\ldots,\Phi_\pi(x_{T-1})]
\in\mathbb C^{m_\pi\times Td_\pi},
\]
and let
\(h_{\pi,\Phi}(T)\coloneqq
\operatorname{rank}_{\mathrm{gen}}(\mathbf{\Phi}_{\pi,T})\),
where the generic rank is taken over the initial state and the equivariant
parameter matrix \(W\).

\begin{theorem}[Equivariant system identification]
\label{thm:equivariant_identification}
Fix a finite group \(G\) and an analytic feature map
\(\Phi:\mathbb C^d\to\mathbb C^m\) with linearly independent coordinates.
Then \(W\in\mathcal F_\Phi^G\) is generically identifiable from
\(x_0,\ldots,x_T\) if and only if \(h_{\pi,\Phi}(T)=m_\pi\) for every
\(\pi\) with \(n_\pi>0\). Consequently,
\[
T_\Phi(G)=
\max_{\substack{\pi:n_\pi>0\\ m_\pi>0}}
\inf\{T\in\mathbb N:h_{\pi,\Phi}(T)=m_\pi\},
\]
with the infimum interpreted as \(+\infty\) if full rank is never attained.
Irreducible components with \(m_\pi=0\) contain no unknown parameters and
therefore impose no identification requirement.
\end{theorem}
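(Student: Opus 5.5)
The plan is to reduce the identification problem to a family of decoupled linear systems, one per isotypic component, and then read off the rank conditions. After the change of basis, the equivariant matrix $W=\bigoplus_\pi C_\pi\otimes I_{V_\pi}$ maps the $\pi$-isotypic component of $\Phi(x_t)$, identified with $\Phi_\pi(x_t)\in\mathbb C^{m_\pi\times d_\pi}$, to the $\pi$-isotypic component of $x_{t+1}$, identified with $X_{\pi}(x_{t+1})\in\mathbb C^{n_\pi\times d_\pi}$. The key identity I will verify is
\[
X_\pi(x_{t+1}) = C_\pi\,\Phi_\pi(x_t),\qquad t=0,\ldots,T-1 .
\]
Stacking these gives $[X_\pi(x_1),\ldots,X_\pi(x_T)]=C_\pi\mathbf{\Phi}_{\pi,T}$. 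The trajectory is observed, so both sides except $C_\pi$ are known. Moreover, the unknowns $C_\pi$ for distinct $\pi$ enter disjoint equations.

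\textbf{Sufficiency.} Suppose $h_{\pi,\Phi}(T)=m_\pi$ for every $\pi$ with $n_\pi>0$. The rank of $\mathbf{\Phi}_{\pi,T}$ is the maximal size of a nonvanishing minor. Each minor is an analytic function of $(x_0,W)$, where $W$ ranges over the linear space of equivariant matrices, which is parameterized by the $C_\pi$. This is because $x_t$ is obtained by iterating the analytic map $x\mapsto W\Phi(x)$.

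Since the generic rank is $m_\pi$, some $m_\pi\times m_\pi$ minor is not identically zero. The zero set of a nonzero analytic function on a connected domain has measure zero, so for almost all $(x_0,W)$ the matrix $\mathbf{\Phi}_{\pi,T}$ has full row rank. Taking a finite union over $\pi$ preserves the measure-zero property. Off this exceptional set, $C_\pi$ is uniquely determined by $C_\pi=Y_\pi\mathbf{\Phi}_{\pi,T}^{+}$, where $Y_\pi=[X_\pi(x_1),\ldots,X_\pi(x_T)]$. Hence $W$ is uniquely determined.

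\textbf{Necessity.} Suppose $h_{\pi,\Phi}(T)<m_\pi$ for some $\pi$ with $n_\pi>0$. Then for every $(x_0,W)$ there is a nonzero $z\in\mathbb C^{m_\pi}$ with $z^\top\mathbf{\Phi}_{\pi,T}=0$. Set $C'_\pi=C_\pi+yz^\top$ for any nonzero $y\in\mathbb C^{n_\pi}$, and leave the other blocks unchanged. The resulting $W'$ is equivariant and differs from $W$. I will show by induction on $t$ that $W'$ generates the same trajectory: if $x'_t=x_t$, then $\Phi_\pi(x'_t)=\Phi_\pi(x_t)$ and $z^\top\Phi_\pi(x_t)=0$, so $x'_{t+1}=x_{t+1}$.

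Thus on a set of positive (in fact full) measure, the data fail to determine $W$, and identifiability fails. The components with $m_\pi=0$ carry $C_\pi\in\mathbb C^{n_\pi\times 0}$, which contains no parameters. Components with $n_\pi=0$ likewise carry none. This yields the formula for $T_\Phi(G)$. Here I also use that $h_{\pi,\Phi}(T)$ is nondecreasing in $T$, since columns are only appended. Consequently the set of $T$ achieving full rank is upward closed, and the maximum of the per-component thresholds is exactly the minimal $T$ that works for all components.

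The main obstacle is the genericity bookkeeping. One must check that "generic rank" is well-defined and attained almost everywhere jointly in $(x_0,W)$ with $W$ restricted to the equivariant subspace. This needs the analyticity of the entries of $\mathbf{\Phi}_{\pi,T}$ in $(x_0,(C_\pi)_\pi)$, and the statement that a non-identically-zero analytic function on $\mathbb C^{d}\times\prod_\pi\mathbb C^{n_\pi\times m_\pi}$ vanishes only on a null set. A secondary point is the basis identification. The coordinates $X_\pi$ and $\Phi_\pi$ must be chosen with the same basis $\{u_j\}$ of $V_\pi$, so that $C_\pi\otimes I$ acts as left multiplication. I would settle this by fixing one model of each irreducible representation via Schur's lemma.
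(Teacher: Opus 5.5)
Your proposal is correct and follows essentially the same route as the paper. Schur's lemma gives the block form $W=\bigoplus_\pi C_\pi\otimes I_{V_\pi}$, the stacked relation $[X_\pi(x_1),\ldots,X_\pi(x_T)]=C_\pi\mathbf{\Phi}_{\pi,T}$ reduces uniqueness to full row rank of $\mathbf{\Phi}_{\pi,T}$, and genericity comes from a non-identically-zero analytic minor in $(x_0,(C_\pi)_\pi)$. Your rank-one perturbation $yz^\top$ is a special case of the paper's annihilating $D_\pi$. Your two explicit checks fill in steps the paper leaves implicit: that the perturbed system reproduces the whole trajectory by induction, and that $h_{\pi,\Phi}(T)$ is nondecreasing in $T$, so the per-block thresholds combine by a maximum.
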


Indeed, in each \(\pi\)-block the observed transitions have the form
\(C_\pi\Phi_\pi(x_t)\). Stacking the observations therefore gives a linear
system with design matrix \(\mathbf{\Phi}_{\pi,T}\), so \(C_\pi\) is uniquely
determined exactly when \(\mathbf{\Phi}_{\pi,T}\) has full row rank \(m_\pi\).

Since
\(\operatorname{rank}(\mathbf{\Phi}_{\pi,T})
\le\min\{m_\pi,Td_\pi\}\), we obtain the universal lower bound
\[
T_\Phi(G)\ge
T_{\mathrm{rep}}(G)
\coloneqq
\max_{\pi:n_\pi>0}
\left\lceil\frac{m_\pi}{d_\pi}\right\rceil.
\]
This is the smallest trajectory length allowed by dimension counting; whether
it is attained depends on the excitation profile \(h_{\pi,\Phi}(T)\).

For the trivial group, there is a single one-dimensional irrep with
multiplicity \(m\). Under our assumptions on \(\Phi\),
\(h_{\pi,\Phi}(T)=\min\{m,T\}\) generically, and hence we have
\(T_\Phi(\{e\})=m\).

\paragraph{Linear and affine systems.}
For linear systems, \(\Phi(x)=x\), so \(m_\pi=n_\pi\). For every finite group,
\[
h_{\pi,\Phi}(T)=\min\{n_\pi,Td_\pi\},
\quad
T_{\mathrm{lin}}(G)=
\max_{\pi:n_\pi>0}
\left\lceil\frac{n_\pi}{d_\pi}\right\rceil.
\]
Thus the representation-theoretic lower bound is always tight. The affine case
follows similarly after adjoining the constant feature, which adds one copy of
the trivial representation to the feature space.

\paragraph{Finite Abelian groups.}
Suppose \(G\) is finite Abelian and \(\Phi\) is analytic, has linearly
independent coordinates, and contains the state coordinates, i.e., there is
\(L\in\mathbb C^{d\times m}\) such that \(L\Phi(x)=x\). Since every
irreducible complex representation of \(G\) is one-dimensional,
\(d_\pi=1\) for all \(\pi\in\widehat G\). In this case, $h_{\pi,\Phi}(T)=\min\{m_\pi,T\}$ and
\[
T_\Phi(G)=\max_{\pi:n_\pi>0}m_\pi=T_{\mathrm{rep}}(G).
\]
In particular, this result holds for the full polynomial feature maps
\(\Phi_{\le k}\).

\paragraph{Permutation-equivariant polynomial systems.}
Let \(G\) be a subgroup of the symmetric group \(S_d\), acting by coordinate
permutations, and let \(\Phi=\Phi_{\le k}\) contain all monomials of total
degree at most \(k\), with \(k\ge2\). Then, we show
\[
T_{\Phi_{\le k}}(G)
\le
\max_{\pi:n_\pi>0}m_\pi.
\]
Thus the trajectory length is controlled by the largest active representation
multiplicity rather than by the ambient feature dimension
\(m=\sum_\pi m_\pi d_\pi\).

\begin{example}[Quadratic systems with permutation symmetry]
\label{ex:quadratic_permutation}
Consider \(\Phi=\Phi_{\le2}\) under the standard \(S_d\)-action, where
\(d\ge4\). Let \(V_i\) denote the space of homogeneous polynomials of degree
\(i\). Then \(\mathcal P_{\le2}=V_0\oplus V_1\oplus V_2\), with
\(V_0=\pi_0\), \(V_1=\pi_0\oplus\pi_{\mathrm{std}}\), and
\(V_2=2\pi_0\oplus2\pi_{\mathrm{std}}\oplus\pi_{(d-2,2)}\).
Here \(\pi_0\) is the trivial representation,
\(d_{\pi_{\mathrm{std}}}=d-1\), and
\(d_{\pi_{(d-2,2)}}=d(d-3)/2\). Hence
\(m_{\pi_0}=4\), \(m_{\pi_{\mathrm{std}}}=3\), and
\(m_{\pi_{(d-2,2)}}=1\).

The state representation is
\(\mathbb C^d\cong\pi_0\oplus\pi_{\mathrm{std}}\), so only the trivial and
standard blocks are relevant. Their excitation ranks satisfy
\(h_{\pi_0,\Phi}(T)=\min\{4,T\}\) and
\(h_{\pi_{\mathrm{std}},\Phi}(T)=\min\{3,2T\}\), yielding
\[
T_{\Phi_{\le2}}(S_d)=4.
\]
By contrast, \(m=\binom{d+2}{2}=\Theta(d^2)\). Thus permutation equivariance
reduces the generic trajectory length from \(\Theta(d^2)\) to a constant
independent of \(d\).
\end{example}

\paragraph{Representation stability.}
The preceding phenomenon extends to any fixed polynomial degree. Under the
standard permutation action of \(S_d\), the state representation decomposes as
\[
\mathbb C^d\cong\pi_0\oplus\pi_{\mathrm{std}},
\]
so only the trivial and standard irreducible representations are relevant for
identification, even though other irreducible representations may appear in the
feature space. For fixed \(k\), the multiplicities of these two representations
in \(\Phi_{\le k}\) stabilize  stabilize once \(d\ge k+1\).   In particular, for \(d\)
sufficiently large,
\[
m_{\pi_0}=\sum_{r=0}^{k}p(r),\qquad
m_{\pi_{\mathrm{std}}}
=\sum_{j=0}^{k-1}(k-j)p(j),
\]
where \(p(r)\) denotes the integer partition function.
Defining
\[
M_k\coloneqq
\max\{m_{\pi_0},m_{\pi_{\mathrm{std}}}\},
\]
the preceding polynomial identification bound gives
\(T_{\Phi_{\le k}}(S_d)\le M_k\). Hence, for every fixed \(k \ge 2\),
\[
T_{\Phi_{\le k}}(S_d)=O_k(1),
\quad
\dim(\Phi_{\le k})=\binom{d+k}{k}=\Theta_k(d^k).
\]
Thus, fixed-degree permutation-equivariant polynomial systems are generically
identifiable from trajectories whose length remains bounded independently of
the state dimension \(d\).

\subsection{Adaptive symmetry discovery}
\label{sec:adaptive}

We now consider system identification when the underlying symmetry is unknown.
Let \(\Gamma\) be a finite ambient group of transformations, and let
\(\mathcal G\) be a finite family of candidate subgroups of \(\Gamma\). We
assume that \(\Gamma\) acts on the state and feature spaces through compatible
representations \(\rho\) and \(\rho_\Phi\), with the actions of each candidate
group obtained by restriction. We further assume that the full symmetry group
of the unknown dynamics within \(\Gamma\) belongs to \(\mathcal G\).

We first recall the notion of a generating set, which provides a compact
description of a potentially large finite group.

\begin{definition}[Generating set]
A subset \(S\subseteq G\) is called a \emph{generating set} of a finite group
\(G\) if every element of \(G\) can be obtained through finitely many
compositions of elements of \(S\) and their inverses. Equivalently, every
\(g\in G\) can be written as
\(g=s_1^{\epsilon_1}\cdots s_r^{\epsilon_r}\), where
\(s_i\in S\) and \(\epsilon_i\in\{-1,1\}\). In this case, we write
\(G=\langle S\rangle\).
\end{definition}

Generating sets are closely connected to Cayley graphs. Given \(S\subseteq G\),
the Cayley graph \(\operatorname{Cay}(G,S)\) has vertex set \(G\), with edges
corresponding to composition by elements of \(S\) and their inverses. This
graph is connected if and only if \(S\) generates \(G\). The classical
Alon--Roichman theorem~\citep{alon1994random} establishes the substantially
stronger fact that \(O(\log |G|)\) uniformly sampled group elements produce an
expanding random Cayley graph with high probability. In particular, they form
a generating set with high probability.

For our purposes, expansion itself is not required, and a simpler argument
already yields the logarithmic generation bound. As long as the sampled
elements generate a proper subgroup, a new uniform sample lies outside this
subgroup with probability at least \(1/2\). Whenever this occurs, the size of
the generated subgroup at least doubles. Let
\[
|G|_{\max}\coloneqq\max_{H\in\mathcal G}|H|
\]
and define
\[
N_{\mathcal G,\delta}
\coloneqq
\left\lceil
8\left(
\left\lceil\log_2|G|_{\max}\right\rceil
+\log\frac{|\mathcal G|}{\delta}
\right)
\right\rceil.
\]
Then \(N_{\mathcal G,\delta}\) i.i.d.\ uniform samples from each candidate
group generate all groups in \(\mathcal G\) simultaneously with probability
at least \(1-\delta\). We provide the elementary argument in the appendix.

\paragraph{Separating candidate symmetries.}
Known-symmetry identifiability alone does not necessarily imply that different
candidate symmetry classes can be distinguished from the same short
trajectory. We therefore isolate the condition required for adaptive
discovery.

\begin{definition}[Generic candidate separation]
\label{def:candidate_separation}
Let
\[
T_\Phi(\mathcal G)
\coloneqq
\max_{G\in\mathcal G}T_\Phi(G).
\]
We call \(\mathcal G\) \emph{generically separating} if, for every
\(G\in\mathcal G\), a generic \(G\)-equivariant system whose full symmetry
group within \(\Gamma\) is \(G\), together with a generic initial state, has
the following property: for every \(T\ge T_\Phi(\mathcal G)\), if a candidate
\(H\in\mathcal G\) admits an \(H\)-equivariant system consistent with the
observed trajectory, then \(H\) is a subgroup of \(G\).
\end{definition}

Thus, generic separation rules out spurious candidate groups that are
incomparable with, or strictly larger than, the true group. It does not rule
out subgroups of the true group, since a \(G\)-equivariant system is
automatically equivariant with respect to every subgroup of \(G\).

The condition is automatic when the candidate groups are totally ordered by
inclusion. Indeed, every candidate subgroup of the true group is feasible.
Conversely, if a candidate group strictly contains the true group, then every
system equivariant to that candidate is also equivariant to the true group.
Once \(T\ge T_\Phi(G)\), uniqueness within the \(G\)-equivariant class forces
such a system to coincide with the true dynamics, contradicting the assumption
that \(G\) is its full symmetry group.

We are now ready to state the adaptive identification procedure.

\begin{algorithm}[t]
\caption{Adaptive symmetry discovery}
\label{alg:adaptive}
\begin{algorithmic}[1]
\STATE \textbf{Input:} trajectory \((x_0,\ldots,x_T)\), feature map
\(\Phi\), candidate family \(\mathcal G\), failure probability \(\delta\)
\STATE \textbf{Output:} parameter matrix \(W\) and generators \(S\) of the
discovered symmetry group
\STATE Form
\(X\gets[\Phi(x_0),\ldots,\Phi(x_{T-1})]\) and
\(Y\gets[x_1,\ldots,x_T]\)
\STATE Set \(N\gets N_{\mathcal G,\delta}\) and
\(\mathcal C\gets\varnothing\)
\FOR{each \(G\in\mathcal G\)}
    \STATE Sample \(S_G=\{g_1,\ldots,g_N\}\), where
    \(g_i\overset{\mathrm{i.i.d.}}{\sim}\operatorname{Unif}(G)\)
    \STATE Test feasibility of
    \[
    WX=Y,\qquad
    \rho(g)W=W\rho_\Phi(g)
    \quad\text{for all }g\in S_G
    \]
    \IF{\textbf{feasible}}
        \STATE Store \((G,S_G,W_G)\) in \(\mathcal C\), where \(W_G\)
        is any feasible solution
    \ENDIF
\ENDFOR
\STATE Choose a candidate \(G\) of maximum cardinality in \(\mathcal C\)
\STATE \textbf{return} \(W_G,S_G\)
\end{algorithmic}
\end{algorithm}

\begin{theorem}[Adaptive symmetry discovery]
\label{thm:adaptive}
Let \(\mathcal G\) be a generically separating family of finite candidate
groups, and suppose that the full symmetry group \(G\) of an unknown system
\(f(x)=W\Phi(x)\) belongs to \(\mathcal G\). Suppose the trajectory is
generated from a generic initial state by a generic \(G\)-equivariant system.
Then, provided that
\[
T\ge T_\Phi(\mathcal G)
=
\max_{H\in\mathcal G}T_\Phi(H),
\]
Algorithm~\ref{alg:adaptive} recovers the unknown dynamics and a generating
set of its full symmetry group with probability at least \(1-\delta\).
Assuming uniform sampling and representation evaluation can be performed
efficiently for each candidate group, the runtime is polynomial in
\(d,m,T,|\mathcal G|,\log|G|_{\max}\), and
\(\log(1/\delta)\).
\end{theorem}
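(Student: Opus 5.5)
The proof splits into three pieces: a probabilistic event on the sampled generators, a deterministic argument on that event, and a runtime count.

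\textbf{Probabilistic event.} Let $E$ be the event that, for every $H\in\mathcal G$, the sampled set $S_H$ generates $H$. By the generation bound stated before Definition~\ref{def:candidate_separation}, $\Pr[E]\ge 1-\delta$ with $N=N_{\mathcal G,\delta}$. To verify it, fix $H$ and track $\langle g_1,\ldots,g_i\rangle$. While this subgroup is proper, its index is at least $2$, so $g_{i+1}$ falls outside it with probability at least $1/2$, and each such event at least doubles its order. Hence at most $\lceil\log_2|H|\rceil$ successes are needed. A Chernoff bound for Bernoulli$(1/2)$ trials, which stochastically dominate the indicators, shows that $N\ge 8(\lceil\log_2|G|_{\max}\rceil+\log(|\mathcal G|/\delta))$ trials produce fewer successes with probability at most $\delta/|\mathcal G|$. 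A union bound over $\mathcal G$ completes this step.

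\textbf{Deterministic argument on $E$.} Since $\rho$ and $\rho_\Phi$ are homomorphisms, the intertwining relation $\rho(g)W=W\rho_\Phi(g)$ is closed under products and inverses. On $E$, imposing it only for $g\in S_H$ is therefore equivalent to $H$-equivariance, so the feasibility test for $H$ decides exactly whether some $H$-equivariant system in $\mathcal F_\Phi$ is consistent with the trajectory.

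\emph{Soundness for the true group.} The true group $G$ is feasible, witnessed by the true $W$. Any feasible $W_G$ is $G$-equivariant and consistent with the data. Since $T\ge T_\Phi(\mathcal G)\ge T_\Phi(G)$ and the data are generic, Theorem~\ref{thm:equivariant_identification} gives uniqueness within $\mathcal F_\Phi^G$, so $W_G$ equals the true $W$.

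\emph{Completeness of the selection.} Generic separation says every feasible $H$ is a subgroup of $G$, so $|H|\le|G|$ with equality only if $H=G$. The maximum-cardinality candidate in $\mathcal C$ is therefore $G$ itself. The algorithm then returns the true $W$ together with $S_G$, which generates the full symmetry group. The genericity sets involved are the finitely many measure-zero exceptional sets from Definition~\ref{def:candidate_separation} and Theorem~\ref{thm:equivariant_identification}, one per candidate. Their union is still null, so a single generic choice of initial state and parameters handles all candidates simultaneously.

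\textbf{Runtime.} Each feasibility test is a linear system in the $dm$ unknowns of $W$. It has $dT$ data equations plus $N(dm)$ intertwining equations, so Gaussian elimination runs in time polynomial in $d,m,T,N$. There are $|\mathcal G|$ tests, and $N=O(\log|G|_{\max}+\log(|\mathcal G|/\delta))$. Adding the assumed efficient sampling and representation evaluation gives the stated polynomial bound.

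\textbf{Main obstacle.} The subtle step is the Chernoff/union-bound constant: the dependence between successive trials must be handled by a martingale or stochastic-domination argument rather than independence. I would handle it by coupling the indicators with i.i.d.\ Bernoulli$(1/2)$ variables and applying the multiplicative Chernoff bound $\Pr[\mathrm{Bin}(N,1/2)<k]\le e^{-N/8}$ once $N\ge 4k$, which the factor $8$ comfortably covers.
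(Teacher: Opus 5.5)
Your proposal is correct and follows the paper's proof step for step: the event that every $S_H$ generates $H$ (subgroup doubling, a binomial tail bound, and a union bound over $\mathcal G$), the equivalence of the generator constraints with full $H$-equivariance, uniqueness of the true $W$ via Theorem~\ref{thm:equivariant_identification}, selection of $G$ as the unique maximum-cardinality feasible candidate via generic separation, and a Gaussian-elimination runtime count. There are two harmless slips in the tail-bound step. First, it is the success count that stochastically dominates $\mathrm{Bin}(N,1/2)$ until generation completes, not the reverse as you wrote. Second, your bound $\Pr[\mathrm{Bin}(N,1/2)<k]\le e^{-N/8}$ for $N\ge 4k$ is Hoeffding's inequality; the multiplicative form $e^{-\epsilon^2\mu/2}$ at $\epsilon=1/2$ gives only $e^{-N/16}$. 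The paper avoids this by using $r\le\mu/4$ and $\epsilon=3/4$ to get $e^{-9N/64}$, and either route suffices with the factor $8$.
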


The theorem shows that, under generic candidate separation, discovering the
symmetry requires no additional trajectory observations beyond those needed
when the group is known. The additional cost is computational: the algorithm
solves linearly constrained feasibility problems using only logarithmically
many sampled elements per candidate group. From the Cayley-graph viewpoint,
these samples define a sparse random Cayley graph of each candidate. The
Alon--Roichman theorem guarantees the stronger property of expansion using
\(O(\log|G|)\) samples, while our identification procedure only requires
connectivity, or equivalently generation.

\begin{remark}[Why the largest feasible group is selected]
Suppose \(G\) is the full symmetry group of the unknown dynamics. With
probability at least \(1-\delta\), the sampled set \(S_G\) generates \(G\), so
imposing equivariance with respect to \(S_G\) is equivalent to imposing
equivariance with respect to all of \(G\). Since \(T\ge T_\Phi(G)\), the
corresponding feasible system is uniquely the true one.

Every candidate subgroup of \(G\) is also feasible. Generic separation rules
out all other candidates. Hence \(G\) is the unique feasible candidate of
maximum cardinality.
\end{remark}

\begin{remark}[Randomization and random Cayley graphs]
The randomness in Algorithm~\ref{alg:adaptive} is used only to obtain compact
generating sets for the candidate groups. The sampled elements may
equivalently be viewed as defining random Cayley graphs. By the
Alon--Roichman theorem, \(O(\log|G|)\) random elements suffice with high
probability to obtain an expanding Cayley graph, and hence a generating set.
Our analysis requires only the weaker connectivity property and therefore uses
the elementary subgroup-growth argument above. If a generating set is supplied
for every candidate group in advance, this randomization is unnecessary.
\end{remark}

\begin{remark}[Nested candidate families]
If the candidate groups are totally ordered by inclusion, generic candidate
separation is automatic. Thus adaptive discovery over a nested family achieves
the same trajectory length \(T_\Phi(\mathcal G)\) as identification with the
group known in advance.
\end{remark}

\begin{remark}[Optimal trajectory length]
Under generic candidate separation, Algorithm~\ref{alg:adaptive} achieves the
known-symmetry trajectory length. In particular, no additional trajectory
observations are required for discovering the unknown group. The additional
randomness and computation are used only to determine which symmetry
constraints are compatible with the observed dynamics.
\end{remark}

\begin{remark}[General candidate families]
For arbitrary incomparable candidate groups,
\(T_\Phi(\mathcal G)\) need not be sufficient for symmetry discovery, even
though it suffices once the correct group is known. In this case, one may
instead consider the smallest trajectory horizon at which generic candidate
separation holds. The same algorithm and argument then apply at this larger
horizon.
\end{remark}

\paragraph{Discovery among bounded-index subgroups.}
We next consider a complementary setting in which the possible symmetries are
large subgroups of a single known ambient group. Let \(\Gamma\) be a finite
group and define
\[
\mathcal H_B
:=
\{H\le \Gamma:[\Gamma:H]\le B\}.
\]
We assume that the full symmetry group \(H\) of the unknown dynamics belongs
to \(\mathcal H_B\), but we do not require an enumeration of
\(\mathcal H_B\).

The key observation is that a uniformly sampled element of \(\Gamma\) belongs
to \(H\) with probability
\[
\mathbb P_{g\sim\operatorname{Unif}(\Gamma)}(g\in H)
=
\frac{|H|}{|\Gamma|}
=
\frac{1}{[\Gamma:H]}
\ge \frac1B.
\]
Hence, if membership in the unknown symmetry group can be determined from the
observed trajectory, sampling from \(\Gamma\) provides uniform samples from
\(H\) without explicitly searching over the candidate subgroups.

To formalize this, we assume \emph{generic elementwise separation}: for a
generic system with full symmetry group \(H\), a generic initial state, and
the trajectory length under consideration, a sampled \(g\in\Gamma\) satisfies
\(g\in H\) if and only if there exists \(\widetilde W\) such that
\[
\widetilde W X=Y,
\qquad
\rho(g)\widetilde W
=
\widetilde W\rho_\Phi(g).
\]
The reverse implication is automatic for \(g\in H\), since the true parameter
matrix \(W\) satisfies both constraints.

Therefore, we may sample \(g\sim\operatorname{Unif}(\Gamma)\), retain it
whenever the above feasibility test succeeds, and repeat. Conditioned on
acceptance, the retained elements are i.i.d.\ uniform samples from \(H\).
Since the acceptance probability is at least \(1/B\), obtaining \(N\)
accepted elements requires at most \(BN\) ambient samples in expectation.

This again admits a Cayley-graph interpretation. The accepted elements define
a random Cayley graph of the unknown group \(H\). Thus
\(O(\log|H|)\) accepted elements suffice to generate \(H\) with high
probability, while the Alon--Roichman theorem gives the stronger expansion
guarantee.

Define
\[
N_{\Gamma,\delta}
:=
\left\lceil
8\left(
\left\lceil\log_2|\Gamma|\right\rceil
+
\log\frac1\delta
\right)
\right\rceil.
\]

\begin{algorithm}[t]
\caption{Discovery of a bounded-index symmetry group}
\label{alg:bounded_index}
\begin{algorithmic}[1]
\STATE \textbf{Input:} trajectory \((x_0,\ldots,x_T)\), feature map
\(\Phi\), ambient group \(\Gamma\), index bound \(B\), failure probability
\(\delta\)
\STATE \textbf{Output:} parameter matrix \(W\) and generators \(S\) of the
discovered subgroup
\STATE Form
\(X\gets[\Phi(x_0),\ldots,\Phi(x_{T-1})]\) and
\(Y\gets[x_1,\ldots,x_T]\)
\STATE Set \(N\gets N_{\Gamma,\delta}\) and \(S\gets\varnothing\)
\WHILE{\(|S|<N\)}
    \STATE Sample \(g\sim\operatorname{Unif}(\Gamma)\)
    \STATE Test feasibility of
    \[
    \widetilde W X=Y,
    \qquad
    \rho(g)\widetilde W=\widetilde W\rho_\Phi(g)
    \]
    \IF{\textbf{feasible}}
        \STATE Add \(g\) to \(S\)
    \ENDIF
\ENDWHILE
\STATE Solve
\[
WX=Y,
\qquad
\rho(g)W=W\rho_\Phi(g)
\quad\text{for all }g\in S
\]
\STATE \textbf{return} \(W,S\)
\end{algorithmic}
\end{algorithm}

\begin{theorem}[Bounded-index subgroup discovery]
\label{cor:bounded_index}
Suppose the full symmetry group \(H\) of the unknown dynamics satisfies
\([\Gamma:H]\le B\), and suppose generic elementwise separation holds for
\[
T\ge
T_\Phi(\mathcal H_B)
:=
\max_{\substack{H\le\Gamma\\
[\Gamma:H]\le B}}
T_\Phi(H).
\]
Then Algorithm~\ref{alg:bounded_index} recovers the unknown dynamics and a
generating set of \(H\) with probability at least \(1-\delta\).

The algorithm does not enumerate the subgroups in \(\mathcal H_B\). The
expected number of ambient samples and feasibility tests is
\[
O\left(
B\left(
\log|\Gamma|+\log\frac1\delta
\right)
\right),
\]
and is therefore independent of \(|\mathcal H_B|\). Assuming efficient
sampling from \(\Gamma\), representation evaluation, and linear feasibility
testing, the expected runtime is polynomial in
\(d,m,T,B,\log|\Gamma|\), and \(\log(1/\delta)\). In particular, when
\(B=O(1)\), the sampling overhead is logarithmic in \(|\Gamma|\).
\end{theorem}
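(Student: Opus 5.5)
The argument runs in three steps: correctness of the acceptance test, the generation guarantee for the accepted samples, and the final solve; the complexity bound is then a counting argument. Throughout, fix a generic $H$-equivariant system with full symmetry group $H$, $[\Gamma:H]\le B$, and a generic initial state, so that both generic elementwise separation and known-symmetry identifiability hold. The latter holds because $T\ge T_\Phi(\mathcal H_B)\ge T_\Phi(H)$, and by Theorem~\ref{thm:equivariant_identification} it means the true $W$ is the unique $H$-equivariant solution of $WX=Y$. Discarding the measure-zero exceptional sets for both properties, all remaining randomness is the algorithm's sampling from $\Gamma$.

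\emph{Step 1 (acceptance is exact).} By generic elementwise separation, a sampled $g$ passes the feasibility test if and only if $g\in H$. Hence each ambient draw is accepted independently with probability $|H|/|\Gamma|\ge 1/B$. Conditioned on acceptance, $g$ is uniform on $H$. The accepted sequence is therefore i.i.d.\ $\operatorname{Unif}(H)$, and $S\subseteq H$ deterministically.

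\emph{Step 2 (generation).} Next I show that $N=N_{\Gamma,\delta}$ i.i.d.\ uniform elements of $H$ generate $H$ with probability at least $1-\delta$. I will reuse the subgroup-doubling argument from the appendix, specialized to one group. Let $K_i=\langle g_1,\dots,g_i\rangle$. Whenever $K_i\ne H$, the subgroup $K_i$ has index at least $2$, so $\mathbb P(g_{i+1}\notin K_i)\ge 1/2$, and each such event at least doubles $|K_i|$. Thus at most $L=\lceil\log_2|H|\rceil\le\lceil\log_2|\Gamma|\rceil$ successes are needed. The event that generation fails is contained in the event that a $\mathrm{Bin}(N,1/2)$-dominated success count is below $L$. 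More precisely, I will couple with i.i.d.\ Bernoulli$(1/2)$ variables so that successes dominate them until $H$ is reached. A Chernoff bound with $N\ge 8(L+\log(1/\delta))$, using $\mathbb P(\mathrm{Bin}(N,1/2)<L)\le \exp(-N/8)$ when $L\le N/4$, gives failure probability at most $\delta$. Setting up this coupling correctly is the only delicate point, since the success probabilities depend on the past. I will handle it with a standard stochastic domination argument via a martingale or sequential coupling.

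\emph{Step 3 (recovery).} On the generation event, $\langle S\rangle=H$. Because $\rho$ and $\rho_\Phi$ are homomorphisms, intertwining for all $g\in S$ implies intertwining for all of $H$, since the commutation relation is closed under products and inverses. The final linear system is therefore exactly $WX=Y$ with $W$ being $H$-equivariant. By Step 0 its unique solution is the true $W$, and $S$ is a generating set of $H$.

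\emph{Complexity.} The number of ambient draws needed to collect $N$ acceptances is a sum of $N$ geometric variables with success probability $p\ge 1/B$. Its expectation is $N/p\le BN=O(B(\log|\Gamma|+\log(1/\delta)))$, and this involves no enumeration of $\mathcal H_B$. Each draw costs one sample from $\Gamma$, one evaluation of $\rho,\rho_\Phi$, and one linear feasibility test of size polynomial in $d,m,T$, for instance a rank comparison via Gaussian elimination. The final solve has $O(N)$ blocks of constraints. Multiplying these costs gives the stated expected polynomial runtime.

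I expect the main obstacle to be Step 2's probabilistic bound with the exact constant $8$. The rest follows directly from the separation assumption and Theorem~\ref{thm:equivariant_identification}.
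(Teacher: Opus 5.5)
Your proposal is correct and follows the paper's proof essentially step for step. The shared steps are: exact acceptance via generic elementwise separation plus rejection sampling; generation of $H$ by the accepted samples via the subgroup-doubling argument, which the paper simply cites as Proposition~\ref{prop:random_generators} using $|H|\le|\Gamma|$; reduction to full $H$-equivariance via Lemma~\ref{lem:equivariance_generators}; uniqueness from $T\ge T_\Phi(\mathcal H_B)\ge T_\Phi(H)$; and the $B N_{\Gamma,\delta}$ expected-sample count. The only difference is cosmetic: your tail bound $\exp(-N/8)$ under $L\le N/4$ is the additive Hoeffding form, while the paper uses the multiplicative Chernoff bound $\exp(-9N/64)$ under $L\le N/8$, and both suffice when $N\ge 8(L+\log(1/\delta))$.
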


\begin{remark}[Role of the index bound]
The index bound controls the rejection-sampling overhead. Since
\([\Gamma:H]\le B\), each uniform ambient sample belongs to the unknown group
with probability at least \(1/B\). Thus bounded index allows the group to be
discovered directly from ambient samples without constructing a separate
sampler or enumerating candidate subgroups.
\end{remark}

\begin{remark}[Computational efficiency]
Each sampled element requires only a linear feasibility test in the entries of
\(\widetilde W\), consisting of the trajectory constraint
\(\widetilde W X=Y\) and one intertwining constraint
\(\rho(g)\widetilde W=\widetilde W\rho_\Phi(g)\). Hence the number of such
tests depends on \(B\) and \(\log|\Gamma|\), but not on the potentially very
large number of bounded-index subgroups.
\end{remark}

Therefore, the main message of this section can be summarized as follows:
\begin{tcolorbox}
When the candidate symmetries are generically distinguishable, equivariant
dynamics can be identified without knowing the symmetry in advance and with
the same trajectory length as in the known-symmetry setting. Moreover, when
the unknown symmetry is a bounded-index subgroup of a known ambient group,
there is no need to enumerate the candidate subgroups: one can instead sample
directly from the ambient group, test individual elements for symmetry, and
recover the unknown subgroup from the accepted samples. The resulting
sampling overhead is at most a factor \(B\), and is constant when
\(B=O(1)\).
\end{tcolorbox}

\section{Conclusion and Future Work}

In this paper, we study how to integrate adaptive symmetry discovery with the problem of dynamical system identification. Focusing on single-trajectory data, we address the question of how to identify a system that is symmetric with respect to an unknown finite group. Our main contribution is a method based on the theory of Cayley graph expanders and generating sets of finite groups, which enables adaptation to unknown symmetries with near-zero overhead. Moreover, the proposed approach achieves the same trajectory length (i.e., sample complexity) as in the setting where the symmetries are known, thereby yielding optimal adaptation.

An important direction for future work is to extend our results to infinite (Lie) groups, which would likely require theoretical tools beyond the expander-based framework developed for finite groups. Another promising direction is to study noisy dynamical systems and to identify system parameters while simultaneously adapting to symmetries in such settings. Establishing provable sample complexity guarantees in the presence of noise remains open, to the best of our knowledge. More broadly, it would be interesting to investigate whether similar adaptive symmetry techniques can be developed for learning linear time-invariant systems and for control systems with hidden states and noisy observations in modern settings \citep{hazan2025research}. We leave these directions for future work.

\section*{Acknowledgements}

BT and MW were partially supported by NSF Award CBET-2112085 and DMS-2406905. MW acknowledges partial funding from an Alfred P. Sloan Fellowship in Mathematics and the AI2050 program at Schmidt Sciences (Grant G-25-69786).
This material is based on research sponsored by the Air Force Office of Scientific Research under agreement number FA9550261B044. The U.S. Government is authorized to reproduce and distribute reprints for Governmental purposes notwithstanding any copyright notation thereon. The opinions, findings, views, conclusions or recommendations contained herein are those of the authors and should not be interpreted as necessarily representing the official policies or endorsements, either expressed or implied, of the DAF, AFRL or the U.S. Government.
 

\clearpage
\bibliography{ref}
\bibliographystyle{plainnat}

\newpage
\appendix
\onecolumn














\section{Preliminaries}\label{app_back}

In this section, we summarize the basic definitions and background material required to understand the results of the paper. Standard references for finite group theory and representation theory include \citep{serre1977linear,fulton2013representation}.

\subsection{Groups, representations, and equivariance}

\paragraph{Groups.}
A (finite) group $G$ is a finite set equipped with a binary operation $\cdot : G \times G \to G$ satisfying the following properties:
\begin{itemize}
    \item \textbf{Associativity:} $(g \cdot h) \cdot s = g \cdot (h \cdot s)$ for all $g,h,s \in G$.
    \item \textbf{Identity:} There exists an element $e \in G$ such that $e \cdot g = g \cdot e = g$ for all $g \in G$.
    \item \textbf{Inverses:} For every $g \in G$, there exists $g^{-1} \in G$ such that $g \cdot g^{-1} = g^{-1} \cdot g = e$.
\end{itemize}
The cardinality of a finite group $G$ is denoted by $|G|$. For convenience, we drop the dot notation and write $gh$ instead of $g \cdot h$ for all $g,h \in G$.

\paragraph{Examples of finite groups.} Here is a few examples of finite groups:
\begin{itemize}
    \item The cyclic group of integers modulo $n$ under addition, denoted by $\mathbb{Z}/n\mathbb{Z}$.
    \item The permutation group (also known as symmetric group) of group of all permutations of $n$ elements.
    \item Direct products of commutative groups, such as group of sign inversions $\{ \pm 1\}^d$
    \item Dihedral group, consisting of symmetries of a regular polygon with $n$ sides in dimension two (reflections, rotations).
\end{itemize}

\paragraph{Group actions and linear representations.}
Let $V$ be a vector space over $\mathbb{C}$. A (left) \emph{action} of a finite group $G$ on $V$ is a map
\[
\theta : G \times V \to V
\]
satisfying
\[
\theta(e,v) = v, \qquad 
\theta(gh, v) = \theta(g, \theta(h, v))
\quad \text{for all } g,h \in G, \ v \in V.
\]
When $\theta(g,\cdot)$ is linear and invertible for every $g \in G$, the action is equivalently described by a group homomorphism
\[
\rho : G \to \mathrm{GL}(V),
\]
where $\rho(g)v := \theta(g,v)$. In this case, $(V,\rho)$ is called a \emph{linear representation} of $G$. Note that here each $\rho(g) \in \mathbb{C}^{n \times n}$ is an invertible matrix, with $n = \dim(V)$.

\emph{Note:} While group actions can be defined on general sets or manifolds, throughout this paper we only consider linear actions on finite-dimensional complex vector spaces.

\paragraph{Irreducible representations.}
A representation $(V,\rho)$ of $G$ is said to be \emph{irreducible} if it has no nontrivial $G$-invariant subspaces, i.e., the only subspaces $W \subseteq V$ satisfying $\rho(g)W \subseteq W$ for all $g \in G$ are $\{0\}$ and $V$ itself.

A fundamental result in finite group representation theory is \emph{Maschke’s theorem}, which states that every finite-dimensional representation of a finite group over $\mathbb{C}$ is completely reducible. That is, any representation $V$ admits a decomposition of the form
\[
V \;\cong\; \bigoplus_{\pi \in \widehat{G}} \mathbb{C}^{n_\pi} \otimes V_\pi,
\]
where:
\begin{itemize}
    \item $\widehat{G}$ denotes the set of inequivalent irreducible representations of $G$,
    \item $V_\pi$ is a representative irreducible representation of dimension $d_\pi \in \mathbb{N}$,
    \item $n_\pi \in \mathbb{Z}_{\ge 0}$ is the multiplicity of $\pi$ in $V$.
\end{itemize}
The dimensions satisfy
\[
\sum_{\pi \in \widehat{G}} n_\pi d_\pi = n.
\]

Moreover, for finite groups, the number of irreducible representations is finite, thus $|\widehat{G}|<\infty$, and their dimensions obey the following identity
\[
\sum_{\pi \in \widehat{G}} d_\pi^2 = |G|,
\]
with each $d_\pi$ dividing $|G|$.

\paragraph{Unitary representations and change of basis.}
For finite groups, every finite-dimensional representation over $\mathbb{C}$ is equivalent to a unitary representation. In particular, after an appropriate change of basis, we may assume without loss of generality that $\rho(g)$ and $\rho_\Phi(g)$ are unitary for all $g \in G$. This change of basis does not affect equivariance properties or the structure of the state/feature space.

\paragraph{Isotypic decomposition and block structure.}
The decomposition above can be made explicit at the level of matrices. In particular, there exists a change of basis of $V$ under which the representation $\rho$ takes a block-diagonal form
\[
\rho(g) \;=\; \bigoplus_{\pi \in \widehat{G}} \left( I_{n_\pi} \otimes \pi(g) \right),
\qquad \forall g \in G,
\]
where each $\pi(g) \in \mathbb{C}^{d_\pi \times d_\pi}$ is an irreducible representation and $I_{n_\pi}$ denotes the identity matrix of size $n_\pi$.
We use the notation
\[
\rho \;\cong\; \bigoplus_{\pi \in \widehat{G}} n_\pi \, \pi
\]
to denote the decomposition of $\rho$ into irreducible representations, where $n_\pi$ denotes the multiplicity of $\pi$.

\paragraph{Equivariant linear maps.}
Let $(V,\rho_V)$ and $(U,\rho_U)$ be two representations of $G$. A linear map $\psi : V \to U$ is called \emph{$G$-equivariant} if
\[
\psi \circ \rho_V(g) = \rho_U(g) \circ \psi
\quad \text{for all } g \in G.
\]
The space of equivariant linear maps between $V$ and $U$ is denoted by $\mathrm{Hom}_G(V,U)$.

Using the decomposition into irreducible representations, the structure of $\mathrm{Hom}_G(V,U)$ can be characterized explicitly in terms of multiplicities. In particular, if
\[
V \cong \bigoplus_{\pi \in \widehat{G}} \mathbb{C}^{n_\pi} \otimes V_\pi,
\qquad
U \cong \bigoplus_{\pi \in \widehat{G}} \mathbb{C}^{m_\pi} \otimes V_\pi,
\]
then
\[
\mathrm{Hom}_G(V,U) \cong \bigoplus_{\pi \in \widehat{G}} \mathbb{C}^{m_\pi \times n_\pi},
\]
and in particular,
\[
\dim (\mathrm{Hom}_G(V,U)) = \sum_{\pi \in \widehat{G}} m_\pi n_\pi.
\]
This characterization will play a central role in our analysis of equivariant linear dynamical systems. In particular, it allows us to count the dimension of the matrices $W \in \mathbb{R}^{d \times m}$ that satisfy the equivariance condition in our proofs.

\paragraph{Polynomial feature lifting.}
Fix a degree parameter $k \in \mathbb{N}$. Let
\[
\Phi : \mathbb{C}^d \to \mathbb{C}^m
\]
denote the polynomial feature map consisting of all monomials in $d$ variables of total degree at most $k$. Explicitly, for $x = (x^1,\dots,x^d)^\top \in \mathbb{C}^d$,
\[
\Phi(x)
\;=\;
\Bigl(
(x^{1})^{\alpha_1} (x^{2})^{\alpha_2} \cdots (x^{d})^{\alpha_d}
\Bigr)_{\alpha \in \mathcal{I}_k},
\qquad
\mathcal{I}_k
=
\Bigl\{
\alpha \in \mathbb{Z}_{\ge 0}^d
\;\big|\;
\sum_{i=1}^d \alpha_i \le k
\Bigr\}.
\]
The resulting feature dimension is
\[
m = \binom{d+k}{d}.
\]
We refer to $\Phi(x)$ as the \emph{feature representation} of the state $x$.

\paragraph{Polynomially lifted linear dynamical systems.}
A \emph{polynomially lifted linear dynamical system} of degree at most $k$ is a function
\[
f : \mathbb{C}^d \to \mathbb{C}^d
\]
of the form
\[
f(x) = W \Phi(x),
\qquad W \in \mathbb{C}^{d \times m}.
\]
We denote by $\mathcal{F}_{\le k}$ the class of all such systems. Although $f$ is generally nonlinear in the state $x$, it is linear in the lifted feature space. Thus, each system in $\mathcal{F}_{\le k}$ is fully parameterized by the matrix $W$.

For convenience, along a trajectory $\{x_t\}_{t \ge 0}$, we write $\phi_t := \Phi(x_t) \in \mathbb{C}^m$.

\paragraph{Lifted group actions on polynomial features.}
Suppose a finite group $G$ acts linearly on the state space $\mathbb{C}^d$ via a representation
\[
\rho : G \to \mathrm{GL}_d(\mathbb{C}).
\]
This action induces a natural linear action on the polynomial feature space associated with $\Phi$. In particular, there exists a unique representation
\[
\rho_\Phi : G \to \mathrm{GL}_m(\mathbb{C})
\]
such that
\begin{equation}
\Phi(\rho(g)x) = \rho_\Phi(g)\,\Phi(x),
\qquad \forall g \in G,\; x \in \mathbb{C}^d.
\label{eq:lifted_action}
\end{equation}
The representation $\rho_\Phi$ corresponds to the action of $G$ on multivariate polynomials of degree at most $k$ induced by the change of variables $x \mapsto \rho(g)x$.

\paragraph{Equivariant polynomially lifted systems.}
Let $G$ act on $\mathbb{C}^d$ via $\rho$. A dynamical system
$f : \mathbb{C}^d \to \mathbb{C}^d$
is said to be \emph{$G$-equivariant} if
\[
f(\rho(g)x) = \rho(g) f(x),
\qquad \forall g \in G,\; x \in \mathbb{C}^d.
\]
For polynomially lifted linear systems $f(x) = W \Phi(x) \in \mathcal{F}_{\le k}$, this condition is equivalent to the matrix constraint
\begin{equation}
\rho(g) W = W \rho_\Phi(g),
\qquad \forall g \in G,
\label{eq:equivariance_constraint}
\end{equation}
which states that $W$ is an intertwining operator between the representations $\rho_\Phi$ and $\rho$.

We denote by $\mathcal{F}_{\le k}^G \subseteq \mathcal{F}_{\le k}$ the class of all $G$-equivariant polynomially lifted linear dynamical systems of degree at most $k$.

\paragraph{Equivariance as linear constraints.}
For fixed representations $\rho$ and $\rho_\Phi$ of a finite group $G$, the equivariance condition
\[
\rho(g) W = W \rho_\Phi(g), \qquad \forall g \in G,
\]
is a system of linear constraints in the entries of $W$. Consequently, the space of $G$-equivariant matrices forms a linear subspace of $\mathbb{C}^{d \times m}$ whose dimension can be characterized using representation-theoretic multiplicities given above. This observation is essential for the study of the time complexity of optimization problems considered later in the paper.

\paragraph{Genericity and analytic maps.}
Throughout the paper, a property is said to hold \emph{generically} if it
holds outside a set of Lebesgue measure zero in the natural finite-dimensional
parameter space. In the polynomial setting, the exceptional set can typically
be taken to be a proper algebraic variety; for analytic models, it is contained
in the zero set of a nonzero analytic function.

Importantly, genericity is always understood with respect to the underlying
parameters, such as the initial state \(x_0\) and the parameter matrix \(W\),
rather than with respect to the ambient feature space. In particular, when
\(m>d\), the feature vector \(\Phi(x)\) need not be generic as an arbitrary
element of \(\mathbb C^m\), since the image of \(\Phi\) may itself lie in a
lower-dimensional subset.

The main genericity principle used throughout our proofs concerns ranks of
analytic matrix-valued functions. Let \(M(\theta)\) be a matrix whose entries
depend analytically on a finite-dimensional parameter \(\theta\), and let
\[
r=\max_\theta \operatorname{rank} M(\theta).
\]
Then \(\operatorname{rank}M(\theta)=r\) generically. Indeed, some
\(r\times r\) minor is not identically zero, and its zero set has Lebesgue
measure zero. We refer to \(r\) as the \emph{generic rank} of \(M\).

For the dynamical systems considered in this paper, each trajectory point
\(x_t\) is an analytic function of the initial state and the parameter matrix:
starting from \(x_{t+1}=W\Phi(x_t)\), this follows recursively from the
analyticity of \(\Phi\). Consequently, all feature design matrices constructed
from a finite trajectory have entries that depend analytically on
\((x_0,W)\). Generic rank statements can therefore be proved by exhibiting
a single choice of \((x_0,W)\) for which an appropriate minor is nonzero.
This is the genericity argument used repeatedly in the proofs below.

\subsection{Cayley graphs, expanders, and random generators}
\label{app:cayley}

Let \(G\) be a finite group and let
\(S=\{s_1,\ldots,s_N\}\) be a multiset of elements of \(G\). We write
\(S^{\pm}\) for the symmetric multiset obtained by adjoining the inverses of
the elements of \(S\). The Cayley graph
\(\operatorname{Cay}(G,S^{\pm})\) has vertex set \(G\), with edges
corresponding to composition by elements of \(S\) and their inverses. Its
connected components are precisely the cosets of the subgroup
\(\langle S\rangle\). In particular,
\[
\operatorname{Cay}(G,S^{\pm})\text{ is connected}
\quad\Longleftrightarrow\quad
\langle S\rangle=G.
\]

The normalized adjacency operator of this graph is
\[
\mathcal A_S f(g)
=
\frac{1}{2N}\sum_{i=1}^N
\bigl(f(s_i g)+f(s_i^{-1}g)\bigr),
\qquad
f:G\to\mathbb C.
\]
Since the generating multiset is symmetric, \(\mathcal A_S\) is self-adjoint
on \(\ell_2(G)\).

\paragraph{Representation-theoretic decomposition.}
The regular representation decomposes as
\[
\ell_2(G)
\cong
\bigoplus_{\pi\in\widehat G}
\mathbb C^{d_\pi}\otimes V_\pi,
\]
where \(d_\pi=\dim V_\pi\). Choosing the irreducible representations to be
unitary, the operator \(\mathcal A_S\) decomposes, up to multiplicity, into
the matrices
\[
A_\pi(S)
\coloneqq
\frac{1}{2N}\sum_{i=1}^N
\bigl(\pi(s_i)+\pi(s_i)^\ast\bigr).
\]
Thus the nontrivial spectrum of the Cayley graph is controlled by the matrices
\(A_\pi(S)\) for \(\pi\neq\mathbf 1\).

A spectral gap is a stronger property than what is needed in our algorithms.
Indeed, the following immediate observation connects expansion to generation.

\begin{lemma}[Spectral certificate for generation]
\label{lem:spectral_generation}
If
\[
\max_{\pi\neq\mathbf 1}
\|A_\pi(S)\|_{\mathrm{op}}<1,
\]
then \(\langle S\rangle=G\).
\end{lemma}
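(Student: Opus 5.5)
We argue by contraposition: if \(H\coloneqq\langle S\rangle\) is a proper subgroup of \(G\), we will exhibit a nontrivial irreducible representation \(\pi\) with \(\|A_\pi(S)\|_{\mathrm{op}}=1\). Everything reduces to finding a nonzero vector fixed simultaneously by all \(\pi(s_i)\).

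First, suppose \(H\neq G\) and consider the permutation representation of \(G\) on \(\ell_2(G/H)\), acting by \((\lambda(g)f)(xH)=f(g^{-1}xH)\). Since \([G:H]\ge 2\), this representation contains the trivial representation (the constants) exactly once, as \(G\) acts transitively on \(G/H\). Its dimension is at least \(2\), so by Maschke's theorem it also contains some nontrivial irreducible \(\pi\neq\mathbf 1\). The key point is that \(\pi\) has a nonzero \(H\)-fixed vector. By Frobenius reciprocity, \(\ell_2(G/H)=\operatorname{Ind}_H^G\mathbf 1\), so \(\langle \operatorname{Res}_H\pi,\mathbf 1\rangle_H=\langle\pi,\operatorname{Ind}_H^G\mathbf 1\rangle_G\ge 1\). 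Alternatively, one can avoid Frobenius reciprocity: the orthogonal complement of the constants in \(\ell_2(G/H)\) contains the nonzero function \(\mathbf 1_{H}-\tfrac{1}{[G:H]}\mathbf 1\), which is fixed by \(H\). Projecting it onto a suitable irreducible summand gives a nonzero \(H\)-fixed vector \(v\) in some nontrivial \(\pi\). This projection step needs the isotypic projections to be \(G\)-equivariant, hence \(H\)-equivariant, which they are.

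Next, take \(v\in V_\pi\) with \(\|v\|=1\) and \(\pi(h)v=v\) for all \(h\in H\), with \(\pi\) chosen unitary. In particular \(\pi(s_i)v=v\) and \(\pi(s_i)^\ast v=\pi(s_i^{-1})v=v\) for every \(i\). Hence
\[
A_\pi(S)v=\frac{1}{2N}\sum_{i=1}^N\bigl(\pi(s_i)v+\pi(s_i)^\ast v\bigr)=v,
\]
so \(\|A_\pi(S)\|_{\mathrm{op}}\ge 1\). This contradicts the hypothesis and proves \(\langle S\rangle=G\).

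The only nontrivial step is ensuring that the \(H\)-fixed vector lands in a \emph{nontrivial} irreducible. I will handle this via the explicit function \(\mathbf 1_H-\tfrac{1}{[G:H]}\mathbf 1\), which is nonzero exactly because \(H\neq G\) and is orthogonal to the trivial isotypic component. Its projection onto some nontrivial isotypic component is therefore nonzero and remains \(H\)-fixed. Any irreducible constituent of that component, reached by an equivariant projection, then supplies the required \(v\).

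Alternatively, one can argue directly on \(\ell_2(G)\). The indicator \(\mathbf 1_H\) minus its mean is invariant under left translation by \(S\), so it is a \(1\)-eigenvector of \(\mathcal A_S\) orthogonal to the constants. Since the nontrivial spectrum is governed by \(A_\pi(S)\), \(\pi\neq\mathbf 1\), this again forces \(\max_{\pi\neq\mathbf1}\|A_\pi(S)\|_{\mathrm{op}}\ge 1\).
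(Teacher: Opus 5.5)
Your proof is correct, but your main argument takes a different route from the paper's. The paper works on \(\ell_2(G)\). If \(\langle S\rangle\neq G\), the Cayley graph is disconnected, so the eigenvalue \(1\) of \(\mathcal A_S\) has an eigenvector orthogonal to the constants. The blockwise action of \(\mathcal A_S\) on the isotypic components of the regular representation then transfers this eigenvalue to some \(A_\pi(S)\) with \(\pi\neq\mathbf 1\). Your final paragraph is exactly this argument, with \(\mathbf 1_H\) minus its mean as an explicit eigenvector.

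Your main argument instead uses the quasi-regular representation \(\ell_2(G/H)\cong\operatorname{Ind}_H^G\mathbf 1\) (or Frobenius reciprocity) to produce a nontrivial unitary irreducible \(\pi\) with a nonzero \(H\)-fixed vector \(v\). After that, \(A_\pi(S)v=v\) is a one-line computation. This avoids both the component count and the identification of \(\mathcal A_S\) with the blocks \(A_\pi(S)\). It also pinpoints the obstruction: the witness \(\pi\) is a constituent of \(\operatorname{Ind}_H^G\mathbf 1\), and \(v\) is fixed by each \(\pi(s_i)\) individually, not merely by their average.

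That viewpoint makes the converse transparent. An eigenvalue \(+1\) of \(A_\pi(S)\) forces, by equality in Cauchy--Schwarz, a common fixed vector of the \(\pi(s_i)\). So generation is equivalent to \(1\notin\operatorname{spec}A_\pi(S)\) for all \(\pi\neq\mathbf 1\), and the norm hypothesis additionally excludes the eigenvalue \(-1\). The paper's version is shorter and keeps the Cayley-graph picture that motivates the lemma.

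One small wording fix: in your third paragraph, it is \emph{some} irreducible constituent of the nonzero isotypic projection that supplies \(v\), namely one onto which your \(H\)-fixed vector has nonzero equivariant projection. It is not \emph{any} constituent, since the projection onto a particular copy of \(\pi\) can vanish.
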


\begin{proof}
If \(\langle S\rangle\neq G\), the Cayley graph has more than one connected
component. Hence the eigenvalue \(1\) of its normalized adjacency operator has
multiplicity greater than one. One copy corresponds to the constant functions,
while another lies in the orthogonal complement of the constants. Under the
irreducible decomposition of the regular representation, this yields a
nontrivial \(\pi\) for which \(A_\pi(S)\) has eigenvalue \(1\), contradicting
the assumed operator-norm bound.
\end{proof}

The classical Alon--Roichman theorem~\citep{alon1994random} gives a much
stronger probabilistic statement: for any fixed desired spectral gap,
\(O(\log |G|)\) independent uniform samples from \(G\) suffice to produce an
expanding random Cayley graph, with constants depending only on the desired
gap. In particular, logarithmically many random elements generate \(G\) with
high probability.

For our purposes, however, expansion is not needed. The following elementary
subgroup-growth argument gives directly the generation guarantee used in
\cref{sec:adaptive}.

\begin{proposition}[Random generation]
\label{prop:random_generators}
Let \(g_1,\ldots,g_N\overset{\mathrm{i.i.d.}}{\sim}\operatorname{Unif}(G)\)
and \(S=\{g_1,\ldots,g_N\}\). Let
\(r=\lceil\log_2|G|\rceil\). If
\[
N\ge
8\left(
r+\log\frac{1}{\delta}
\right),
\]
then
\[
\mathbb P\bigl(\langle S\rangle\neq G\bigr)\le\delta.
\]
\end{proposition}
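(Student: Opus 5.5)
We follow the subgroup-growth argument sketched in \cref{sec:adaptive}. For $i=0,1,\ldots,N$ let $H_i\coloneqq\langle g_1,\ldots,g_i\rangle$, with $H_0=\{e\}$. We call step $i$ a \emph{success} if either $H_{i-1}=G$ or $g_i\notin H_{i-1}$.

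\textbf{Step 1: a single step succeeds with probability at least $1/2$.} Condition on $g_1,\ldots,g_{i-1}$. If $H_{i-1}=G$, the step succeeds by definition. Otherwise $H_{i-1}$ is a proper subgroup, so by Lagrange's theorem $|H_{i-1}|\le|G|/2$. Since $g_i$ is uniform and independent of the past, $\mathbb P(g_i\in H_{i-1}\mid g_1,\ldots,g_{i-1})\le 1/2$. Hence each step succeeds with conditional probability at least $1/2$, whatever the past.

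\textbf{Step 2: enough successes force generation.} Whenever $g_i\notin H_{i-1}$, the group $H_{i-1}$ is a proper subgroup of $H_i$, so $|H_i|\ge 2|H_{i-1}|$ by Lagrange. Suppose $\langle S\rangle\ne G$. Then every $H_{i-1}$ is proper, so every success is a genuine doubling. There can be at most $\log_2|G|\le r$ doublings, since $|H_N|\le|G|$. So the event $\{\langle S\rangle\ne G\}$ is contained in the event that there are at most $r$ successes among $N$ steps.

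\textbf{Step 3: concentration.} Let $Z_i$ be the success indicators. By Step 1, $\mathbb E[Z_i\mid\mathcal F_{i-1}]\ge 1/2$. A standard coupling then shows that $\sum_i Z_i$ stochastically dominates $\mathrm{Bin}(N,1/2)$. Alternatively, apply a Chernoff bound for supermartingales to $\exp(-\lambda\sum_i Z_i)$, using that each conditional factor is at most $(1+e^{-\lambda})/2$. This gives
\[
\mathbb P\Bigl(\textstyle\sum_i Z_i\le r\Bigr)\le \mathbb P(\mathrm{Bin}(N,1/2)\le r).
\]
Apply the multiplicative Chernoff bound with mean $\mu=N/2$ and $r=(1-\eta)\mu$:
\[
\mathbb P(\mathrm{Bin}(N,1/2)\le r)\le \exp(-\eta^2\mu/2).
\]
Since $N\ge 8r$, we have $r\le\mu/4$, so $\eta\ge 3/4$. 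The bound becomes $\exp(-9N/64)$. Using $N\ge 8\log(1/\delta)$, this is at most $\exp(-\tfrac{72}{64}\log(1/\delta))\le\delta$.

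The main obstacle is Step 3, which is more delicate than the counting in Steps 1 and 2. The $Z_i$ are not independent, so we cannot apply Chernoff directly. This is why we define a success to include the absorbing case $H_{i-1}=G$: it guarantees the uniform conditional lower bound $1/2$ at every step, which justifies the domination argument. The remaining work is routine arithmetic to check that the constant $8$ suffices. It has ample slack, so even a cruder bound such as Hoeffding's, giving $\exp(-2(N/2-r)^2/N)\le\exp(-9N/32)$, works as well.
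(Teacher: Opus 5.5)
Your proof is correct and follows the paper's argument: the chain $H_i=\langle g_1,\ldots,g_i\rangle$, Lagrange giving both a conditional success probability of at least $1/2$ and a doubling at each success, stochastic domination by $\mathrm{Bin}(N,1/2)$, and the multiplicative Chernoff bound at $\mu/4$ giving $\exp(-9N/64)\le\delta$. Your convention of counting the absorbing case $H_{i-1}=G$ as a success is a clean way to make rigorous the paper's phrase ``stochastically dominates \ldots until generation is complete.''
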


\begin{proof}
Let \(H_i=\langle g_1,\ldots,g_i\rangle\), with \(H_0=\{e\}\). Whenever
\(H_{i-1}\) is a proper subgroup of \(G\), Lagrange's theorem gives
\(|H_{i-1}|\le |G|/2\), and therefore
\[
\mathbb P
\bigl(
g_i\notin H_{i-1}
\,\big|\,
g_1,\ldots,g_{i-1}
\bigr)
\ge\frac12.
\]
Whenever \(g_i\notin H_{i-1}\), the subgroup strictly grows, and hence
\(|H_i|\ge2|H_{i-1}|\). Consequently, after \(r\) such successful
enlargements, the generated subgroup must equal \(G\).

The number of successful enlargements therefore stochastically dominates a
binomial random variable \(X\sim\operatorname{Bin}(N,1/2)\) until generation
is complete. Hence
\[
\mathbb P(\langle S\rangle\neq G)
\le
\mathbb P(X<r).
\]
Writing \(\mu=N/2\), the assumption on \(N\) implies \(r\le\mu/4\). A
multiplicative Chernoff bound therefore gives
\[
\mathbb P(X<r)
\le
\mathbb P\left(X\le\frac{\mu}{4}\right)
\le
\exp\left(-\frac{9\mu}{32}\right)
=
\exp\left(-\frac{9N}{64}\right)
\le\delta.
\]
\end{proof}

The preceding proposition immediately yields the simultaneous guarantee used
by Algorithm~\ref{alg:adaptive}.

\begin{corollary}[Simultaneous random generation]
\label{cor:simultaneous_generators}
Let \(\mathcal G\) be a finite family of finite groups and define
\[
|G|_{\max}
\coloneqq
\max_{G\in\mathcal G}|G|.
\]
If, independently for every \(G\in\mathcal G\), we draw
\(N_{\mathcal G,\delta}\) i.i.d.\ uniform samples, where
\[
N_{\mathcal G,\delta}
=
\left\lceil
8\left(
\left\lceil\log_2|G|_{\max}\right\rceil
+
\log\frac{|\mathcal G|}{\delta}
\right)
\right\rceil,
\]
then all sampled sets generate their corresponding groups simultaneously with
probability at least \(1-\delta\).
\end{corollary}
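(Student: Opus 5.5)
The plan is a union bound over \(\mathcal G\) combined with Proposition~\ref{prop:random_generators}. For each \(G\in\mathcal G\), the set \(S_G=\{g_1,\ldots,g_N\}\) is drawn from \(G\), and we set the per-group failure probability to \(\delta_G\coloneqq\delta/|\mathcal G|\). Put \(r_G=\lceil\log_2|G|\rceil\). Since \(|G|\le|G|_{\max}\), we have \(r_G\le\lceil\log_2|G|_{\max}\rceil\). Hence
\[
N_{\mathcal G,\delta}\ \ge\ 8\left(\left\lceil\log_2|G|_{\max}\right\rceil+\log\frac{|\mathcal G|}{\delta}\right)\ \ge\ 8\left(r_G+\log\frac{1}{\delta_G}\right).
\]
This uses only the monotonicity of \(\lceil\log_2(\cdot)\rceil\) and the fact that rounding up \(N\) only helps. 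The hypothesis of Proposition~\ref{prop:random_generators} therefore holds for each \(G\), which gives \(\mathbb P(\langle S_G\rangle\neq G)\le\delta/|\mathcal G|\).

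The next step is to combine these events. Let \(E_G\) be the event \(\langle S_G\rangle\neq G\). The event that some sampled set fails to generate its group is \(\bigcup_{G\in\mathcal G}E_G\). By the union bound, which needs no independence across groups,
\[
\mathbb P\Bigl(\bigcup_{G\in\mathcal G}E_G\Bigr)\le\sum_{G\in\mathcal G}\frac{\delta}{|\mathcal G|}=\delta.
\]
Taking complements gives simultaneous generation with probability at least \(1-\delta\). The independence across groups stated in the corollary is harmless but not actually used.

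There is no real obstacle here. The only point needing care is to check that one common sample size \(N_{\mathcal G,\delta}\) dominates the threshold required by Proposition~\ref{prop:random_generators} for every group at once. This holds because that threshold is increasing in \(|G|\), so it is maximized by \(|G|_{\max}\), and the \(\log|\mathcal G|\) term absorbs the union bound. One should also note that Proposition~\ref{prop:random_generators} applies directly when \(N\) exceeds its threshold, since its failure bound \(\exp(-9N/64)\) only decreases in \(N\).
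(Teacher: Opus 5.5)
Your proof is correct and follows the paper's own argument: apply Proposition~\ref{prop:random_generators} to each \(G\in\mathcal G\) with failure probability \(\delta/|\mathcal G|\), then take a union bound. Your explicit check that \(N_{\mathcal G,\delta}\) exceeds each per-group threshold, via \(\lceil\log_2|G|\rceil\le\lceil\log_2|G|_{\max}\rceil\), spells out what the paper leaves implicit, and you are right that the union bound does not need independence across groups.
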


\begin{proof}
Apply Proposition~\ref{prop:random_generators} to each candidate group with failure
probability \(\delta/|\mathcal G|\), and take a union bound.
\end{proof}

\begin{remark}[Relation to Alon--Roichman]
The proof of Proposition~\ref{prop:random_generators} uses only connectivity of the
random Cayley graph. The Alon--Roichman theorem gives the considerably
stronger conclusion that a logarithmic-size random generating set typically
produces a Cayley graph with a nontrivial spectral gap. Thus, the random
generators used in our adaptive procedure can be viewed as a weaker
consequence of the random-Cayley-graph expansion phenomenon.
\end{remark}

\paragraph{Rejection sampling for bounded-index subgroups.}
We finally record the elementary sampling fact used in the bounded-index
variant of our algorithm.

\begin{lemma}[Sampling a bounded-index subgroup]
\label{lem:rejection_sampling}
Let \(H\) be a subgroup of a finite group \(\Gamma\) satisfying
\([\Gamma:H]\le B\). Suppose that we can sample uniformly from \(\Gamma\)
and test membership in \(H\). If
\(g\sim\operatorname{Unif}(\Gamma)\) is accepted whenever \(g\in H\), then
the accepted sample is exactly uniform on \(H\). Moreover, the acceptance
probability satisfies
\[
\mathbb P(g\in H)
=
\frac{|H|}{|\Gamma|}
=
\frac{1}{[\Gamma:H]}
\ge\frac{1}{B}.
\]
Consequently, obtaining \(N\) i.i.d.\ uniform samples from \(H\) requires at
most \(BN\) samples from \(\Gamma\) in expectation.
\end{lemma}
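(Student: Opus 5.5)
The plan is to verify the two claims separately: uniformity of the accepted samples, and the expected number of ambient draws.

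\emph{Uniformity.} Let \(g\sim\operatorname{Unif}(\Gamma)\). For any \(h\in H\),
\[
\mathbb P(g=h\mid g\in H)
=\frac{\mathbb P(g=h)}{\mathbb P(g\in H)}
=\frac{1/|\Gamma|}{|H|/|\Gamma|}
=\frac{1}{|H|}.
\]
So the accepted sample is exactly uniform on \(H\).

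\emph{Acceptance probability.} By Lagrange's theorem, \(|\Gamma|=[\Gamma:H]\,|H|\). Hence
\[
\mathbb P(g\in H)=\frac{|H|}{|\Gamma|}=\frac{1}{[\Gamma:H]}\ge\frac1B.
\]

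\emph{Independence of successive accepted samples.} Run the procedure as an infinite i.i.d.\ sequence of ambient draws and take the accepted draws in order. The acceptance times are stopping times. By the strong Markov property of an i.i.d.\ sequence, or directly by conditioning on the history up to each acceptance time, the draws after an acceptance are independent of everything before it. Therefore the successive accepted elements are i.i.d.\ with the conditional law \(\operatorname{Unif}(H)\) computed above.

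\emph{Expected cost.} Set \(p=\mathbb P(g\in H)\ge 1/B\). The number of ambient draws between consecutive acceptances is geometric with mean \(1/p\le B\). Summing over \(N\) acceptances, linearity of expectation gives an expected total of \(N/p\le BN\) draws.

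None of these steps is a real obstacle. The only point needing care is the independence of the accepted samples, and the stopping-time argument above handles it.
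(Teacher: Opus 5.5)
Your proposal is correct and follows essentially the same argument as the paper: the conditional-probability computation for uniformity, the identity $|H|/|\Gamma| = 1/[\Gamma:H]$, and a per-acceptance expected cost of $[\Gamma:H]$ draws, giving $N[\Gamma:H]\le BN$ in total. Your stopping-time argument for independence and your explicit geometric waiting times spell out what the paper covers in one line (``independent repetitions of the rejection procedure therefore produce i.i.d.\ uniform samples'').
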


\begin{proof}
For every \(h\in H\),
\[
\mathbb P(g=h\mid g\in H)
=
\frac{1/|\Gamma|}{|H|/|\Gamma|}
=
\frac1{|H|},
\]
so the conditional distribution is uniform on \(H\). Independent repetitions
of the rejection procedure therefore produce i.i.d.\ uniform samples. The
acceptance probability is \(1/[\Gamma:H]\), so the expected number of ambient
samples per accepted element is \([\Gamma:H]\le B\).
\end{proof}

\begin{remark}[High-probability sampling complexity]
The expected bound in Lemma~\ref{lem:rejection_sampling} can also be upgraded to
a high-probability bound using a standard Chernoff argument. In particular,
\(O(B(N+\log(1/\delta)))\) ambient samples suffice to obtain \(N\) accepted
samples with probability at least \(1-\delta\). Thus, when \(B=O(1)\), both
the expected and high-probability sampling overheads are constant up to the
logarithmic failure-probability term.
\end{remark}

\section{Proofs of Main Results}
We provide the proof of the main theorems in the paper here in this section.

\subsection{Proof of \cref{thm:equivariant_identification}}
\label{app:proof_identification}

We first prove the general rank characterization and then establish the
special cases used in the main text.

\begin{proof}[Proof of \cref{thm:equivariant_identification}]
Recall the isotypic decompositions
\[
\mathbb C^d\cong
\bigoplus_{\pi\in\widehat G}\mathbb C^{n_\pi}\otimes V_\pi,
\qquad
\mathbb C^m\cong
\bigoplus_{\pi\in\widehat G}\mathbb C^{m_\pi}\otimes V_\pi .
\]
By Schur's lemma, every \(G\)-equivariant matrix
\(W:\mathbb C^m\to\mathbb C^d\) has the form
\[
W=\bigoplus_{\pi\in\widehat G} C_\pi\otimes I_{V_\pi},
\qquad
C_\pi\in\mathbb C^{n_\pi\times m_\pi}.
\]

For notational convenience, reshape the \(\pi\)-isotypic component of a state
\(x\) as a matrix \(X_\pi(x)\in\mathbb C^{n_\pi\times d_\pi}\). With the
analogous reshaping
\(\Phi_\pi(x)\in\mathbb C^{m_\pi\times d_\pi}\) of the feature vector, the
transition \(x_{t+1}=W\Phi(x_t)\) gives, for every \(\pi\),
\[
X_\pi(x_{t+1})=C_\pi\Phi_\pi(x_t).
\]
Stacking \(T\) transitions yields
\[
[X_\pi(x_1),\ldots,X_\pi(x_T)]
=
C_\pi\mathbf{\Phi}_{\pi,T},
\]
where
\(\mathbf{\Phi}_{\pi,T}
=[\Phi_\pi(x_0),\ldots,\Phi_\pi(x_{T-1})]\).

Thus, for a fixed observed trajectory, \(C_\pi\) is uniquely determined if
and only if \(\mathbf{\Phi}_{\pi,T}\) has full row rank \(m_\pi\). Indeed, if
it has full row rank, it admits a right inverse, which uniquely determines
\(C_\pi\). Conversely, if its rank is smaller than \(m_\pi\), there exists a
nonzero matrix \(D_\pi\in\mathbb C^{n_\pi\times m_\pi}\) such that
\(D_\pi\mathbf{\Phi}_{\pi,T}=0\). Replacing \(C_\pi\) by
\(C_\pi+D_\pi\) then gives a distinct equivariant parameter producing exactly
the same observed transitions.

It remains only to pass from a fixed trajectory to the generic statement. The
entries of \(\mathbf{\Phi}_{\pi,T}\) are analytic functions of \(x_0\) and
the equivariant parameters \(C_\pi\). Hence its maximal attainable rank is
achieved outside the common zero set of its maximal nonzero minors, a
measure-zero set. Therefore this maximal rank is precisely
\(h_{\pi,\Phi}(T)\). Since the blocks are independent, \(W\) is generically
identifiable exactly when
\[
h_{\pi,\Phi}(T)=m_\pi
\qquad
\text{for every }\pi\text{ with }n_\pi>0.
\]
Taking the smallest such \(T\) for every active block and then the largest
over the active irreducible representations gives
\[
T_\Phi(G)=
\max_{\pi:n_\pi>0}
\inf\{T\in\mathbb N:h_{\pi,\Phi}(T)=m_\pi\}.
\]
Finally,
\(\operatorname{rank}(\mathbf{\Phi}_{\pi,T})
\le \min\{m_\pi,Td_\pi\}\), and therefore
\[
T_\Phi(G)\ge
\max_{\pi:n_\pi>0}
\left\lceil\frac{m_\pi}{d_\pi}\right\rceil .
\]
\end{proof}

\begin{remark}[Identification may fail for every finite \(T\)]
The possibility \(T_\Phi(G)=+\infty\) in
\cref{thm:equivariant_identification} can genuinely occur without further
assumptions on \(\Phi\). For example, let \(G=\mathbb Z/2\mathbb Z\) act on
\((u,v)\in\mathbb C^2\) by \((u,v)\mapsto(u,-v)\), and consider
\[
\Phi(u,v)=(1,v,uv,u^2v).
\]
The coordinates of \(\Phi\) are analytic and linearly independent. Every
equivariant system in this class has the form
\[
u_{t+1}=a,\qquad
v_{t+1}=v_t(b_0+b_1u_t+b_2u_t^2).
\]
After the first transition, \(u_t=a\), and hence all subsequent feature
vectors in the nontrivial isotypic component are proportional to
\((1,a,a^2)\). Together with the initial feature vector, their rank is at
most two, whereas this component has multiplicity three. Thus its parameter
block can never be identified from a single trajectory.
\end{remark}

\paragraph{The trivial group.}
We next justify the non-equivariant baseline \(T_\Phi(\{e\})=m\). Write
\(\Phi=(\phi_1,\ldots,\phi_m)\). Since the coordinate functions are linearly
independent,
\[
\operatorname{span}\{\Phi(x):x\in\mathbb C^d\}=\mathbb C^m;
\]
otherwise a nonzero linear functional annihilating this span would give a
nontrivial linear relation among the \(\phi_i\). We may therefore choose
\(z_0,\ldots,z_{m-1}\) such that
\(\Phi(z_0),\ldots,\Phi(z_{m-1})\) are linearly independent.

Choose any \(z_m\in\mathbb C^d\). Since these \(m\) feature vectors form a
basis of \(\mathbb C^m\), there exists \(W\) satisfying
\(W\Phi(z_t)=z_{t+1}\) for \(0\le t<m\). Hence
\(z_0,\ldots,z_m\) is a valid trajectory whose feature design matrix has rank
\(m\). The corresponding determinant is therefore a nonzero analytic
function of \((W,x_0)\), so it is nonzero generically. Since a trajectory of
length \(T<m\) provides only \(T\) feature vectors, we conclude that
\[
T_\Phi(\{e\})=m.
\]

\paragraph{Linear systems.}
Suppose now that \(\Phi(x)=x\). Then \(m_\pi=n_\pi\), and in the
\(\pi\)-isotypic component the dynamics become
\[
X_{\pi,t+1}=C_\pi X_{\pi,t},
\qquad
X_{\pi,t}\in\mathbb C^{n_\pi\times d_\pi}.
\]
Consequently,
\[
\mathbf{\Phi}_{\pi,T}
=
[X_{\pi,0},C_\pi X_{\pi,0},\ldots,C_\pi^{T-1}X_{\pi,0}],
\]
which is a block Krylov matrix.

We claim that generically
\[
\operatorname{rank}(\mathbf{\Phi}_{\pi,T})
=
\min\{n_\pi,Td_\pi\}.
\]
It suffices to exhibit one choice attaining this rank. Set
\(n=n_\pi\) and \(r=d_\pi\), and choose
\(C_\pi=\operatorname{diag}(\lambda_1,\ldots,\lambda_n)\), with distinct
nonzero \(\lambda_i\). Partition
\(\min\{n,Tr\}\) rows into at most \(r\) groups, each of size at most \(T\),
and for every row in the \(j\)-th group choose the corresponding row of
\(X_{\pi,0}\) to be the \(j\)-th standard basis vector of \(\mathbb C^r\).
After permuting rows and columns, the resulting full-rank minor consists of
Vandermonde blocks
\[
\begin{bmatrix}
1 & \lambda_i & \cdots & \lambda_i^{T-1}
\end{bmatrix}.
\]
Each block has maximal rank because the corresponding \(\lambda_i\)'s are
distinct. Hence the total rank is \(\min\{n,Tr\}\). Since the relevant minors
are polynomial in \(C_\pi\) and \(X_{\pi,0}\), the same rank holds
generically. Therefore
\[
h_{\pi,\Phi}(T)=\min\{n_\pi,Td_\pi\},
\]
and
\[
T_{\mathrm{lin}}(G)
=
\max_{\pi:n_\pi>0}
\left\lceil\frac{n_\pi}{d_\pi}\right\rceil .
\]

\paragraph{Affine systems.}
The same argument extends to affine dynamics after adjoining the constant
feature. For every nontrivial irrep, the dynamics and the proof above are
unchanged. In the trivial component, write
\(z_{t+1}=Az_t+b\), where \(z_t\in\mathbb C^{n_0}\). The associated feature
vector is \((1,z_t)\). Choosing \(A\) diagonal with distinct eigenvalues
different from \(1\), and writing \(z_\star=(I-A)^{-1}b\), gives
\(z_t=z_\star+A^t(z_0-z_\star)\). After an invertible row operation, the
feature design matrix has rows
\[
1,\qquad
y_i,\lambda_i y_i,\ldots,\lambda_i^{T-1}y_i,
\]
with \(y=z_0-z_\star\). This is again a Vandermonde system, now with nodes
\(1,\lambda_1,\ldots,\lambda_{n_0}\). Thus the same
representation-theoretic formula holds using the multiplicities of the affine
feature representation; the constant feature simply adds one copy of the
trivial representation.

\paragraph{Finite Abelian groups.}
We first record a simple analytic lemma.

\begin{lemma}[Analytic Vandermonde lemma]
\label{lem:analytic_vandermonde}
Let \(f_1,\ldots,f_r:\mathbb C^d\to\mathbb C\) be linearly independent
analytic functions. Then there exist \(x\in\mathbb C^d\) and a diagonal matrix
\(A=\operatorname{diag}(\lambda_1,\ldots,\lambda_d)\) such that
\[
\det\bigl[f_i(A^t x)\bigr]_
{i=1,\ldots,r;\,t=0,\ldots,r-1}\ne0.
\]
\end{lemma}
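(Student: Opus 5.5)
We prove the statement by reducing it to the nonvanishing of a generalized Vandermonde (Wronskian-type) determinant. Take \(x=(x^1,\ldots,x^d)\) with all coordinates nonzero, to be fixed later, and \(A=\operatorname{diag}(\lambda_1,\ldots,\lambda_d)\). Then \(A^tx=(\lambda_1^tx^1,\ldots,\lambda_d^tx^d)\). A natural substitution is \(\lambda_j=e^{s_j}\), which turns \(A^tx\) into \(\bigl(x^je^{ts_j}\bigr)_j\), so the matrix becomes \(\bigl[f_i(x^1e^{ts_1},\ldots,x^de^{ts_d})\bigr]_{i,t}\). We then aim to show that the determinant \(D(x,s)\), which is analytic in \((x,s)\), is not identically zero. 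Since analytic functions vanishing on an open set vanish identically, a single point with \(D\neq 0\) suffices.

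\textbf{Step 1: reduce to a one-variable curve.} Fix a generic direction and restrict to the curve \(s_j=\epsilon\,c_j\) with small \(\epsilon\). Expand each \(f_i\) in a power series around \(x\). Alternatively, and more cleanly, replace the discrete times \(t=0,\ldots,r-1\) by the observation that \(g_i(\tau)\coloneqq f_i(x^1e^{\tau s_1},\ldots,x^de^{\tau s_d})\) is an entire-type analytic function of \(\tau\) (analytic near \(\tau\in[0,r-1]\) for appropriate \(x\)). The matrix is then \([g_i(t)]_{i,t}\).

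\textbf{Step 2: scaling trick.} Let \(s=\epsilon c\). Then \(g_i(t)=h_i(\epsilon t)\) with \(h_i(u)=f_i(x^je^{uc_j})\). Taylor-expand \(h_i(\epsilon t)=\sum_k h_i^{(k)}(0)\epsilon^kt^k/k!\). The matrix factors as \(H\cdot \operatorname{diag}(\epsilon^k/k!)\cdot V\), where \(H=[h_i^{(k)}(0)]\) has infinitely many columns and \(V=[t^k]\). By Cauchy--Binet, the leading term in \(\epsilon\) is a sum of products of \(r\times r\) minors of \(H\) and of the Vandermonde-type matrix \([t^k]\) (the latter minors are generalized Vandermonde determinants on the distinct nodes \(0,\ldots,r-1\), which are positive). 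Selecting the lexicographically minimal set of column indices \(k_1<\cdots<k_r\) for which the minor of \(H\) is nonzero, the lowest-order coefficient in \(\epsilon\) equals that minor times a positive Schur-type factor, and hence is nonzero. Thus \(D\not\equiv 0\) provided \(H\) has rank \(r\).

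\textbf{Step 3 (main obstacle): \(H\) has rank \(r\).} The rank of \(H\) is \(r\) iff \(h_1,\ldots,h_r\) are linearly independent as functions of \(u\) near \(0\). We must therefore choose \(x,c\) so that the restrictions of the \(f_i\) to the exponential curve \(u\mapsto(x^je^{uc_j})_j\) stay independent. Suppose, for contradiction, that for all \((x,c)\) some nonzero \(a(x,c)\) gives \(\sum a_ih_i\equiv0\). To rule this out, we first choose \(r\) points \(z_1,\ldots,z_r\) in the torus region with \(\det[f_i(z_l)]\ne0\); such points exist by linear independence of the \(f_i\), exactly as in the trivial-group argument above. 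We then pick \(x=z_1\) and \(c\) so that the curve passes through \(z_2,\ldots,z_r\) at distinct parameter values \(u_2,\ldots,u_r\). This requires \(z_l^j/x^j=e^{u_lc_j}\) for all \(j,l\), which is overdetermined in general. The fix is to perturb: for generic \(c\), the curve \(\{(x^je^{uc_j})\}\) with \(u\in\mathbb C\) is Zariski-dense in \((\mathbb C^*)^d\) when the \(c_j\) are \(\mathbb Q\)-linearly independent (Kronecker/Ax–Lindemann type density). Hence an analytic \(F=\sum a_if_i\) vanishing on the curve vanishes on its closure. Making this rigorous for analytic (not polynomial) \(f_i\) is the delicate point. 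I would handle it by using real \(u\) and a dense winding on a real torus \(\{|x^j|e^{\mathrm{Re}}\}\times\) angles, which gives density in a totally real torus. An analytic function vanishing there vanishes on an open set of \(\mathbb C^d\) by the identity theorem, yielding \(\sum a_if_i\equiv0\) and a contradiction. Since \(a\) may depend on \((x,c)\), we run this at one fixed generic \((x,c)\) only, where the contradiction is already reached.
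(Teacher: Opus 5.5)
Your route differs from the paper's and can be completed, though two steps are misstated as written.

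The paper works at the origin. It Taylor-expands the \(f_i\) at \(0\), row-reduces to a basis with distinct initial monomials \(x^{\alpha_i}\), and evaluates along the weighted curve \(x(s)=(s^{w_1},\ldots,s^{w_d})\). The lowest-order term of the determinant in \(s\) is then \(\bigl(\prod_i c_i\bigr)s^{\sum_i e_i}\) times an ordinary Vandermonde determinant with nodes \(\lambda^{\alpha_i}\) and the time \(t\) as exponent, which is nonzero for generic \(\lambda\). You instead take \(A=\operatorname{diag}(e^{\epsilon c_j})\) near the identity, so the columns become closely spaced samples \(h_i(\epsilon t)\) of the \(f_i\) restricted to one orbit curve. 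The roles are reversed: the times are the nodes and the Taylor orders in \(u\) are the exponents. The problem then splits into a Wronskian-type statement for one-variable analytic functions and the independence of the restrictions to the orbit. The paper's argument is shorter and more elementary. Yours avoids the initial-monomial and weight bookkeeping and works at any base point \(x\in(\mathbb C^*)^d\) with \((f_i(x))_i\neq0\). The price is more analytic input: total positivity of generalized Vandermonde matrices, a minimality argument for the leading term, Kronecker's theorem, and uniqueness on a totally real torus.

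\textbf{Step 2.} The minors \(\det[t^{k_a}]_{a,\,t=0,\ldots,r-1}\) are not all positive: the node \(t=0\) makes every minor with \(k_1\ge1\) vanish. Hence ``\(D\not\equiv0\) provided \(H\) has rank \(r\)'' is false. Take \(h=(u,u^2)\), or any \(x\) with \(f_1(x)=\cdots=f_r(x)=0\); in both cases the \(t=0\) column is zero. You must also require \((f_i(x))_i\neq0\). Then \(0\) lies in the greedy index set \(K^*=\{k_1^*<\cdots<k_r^*\}\), and the corresponding minor reduces to a generalized Vandermonde determinant on the positive nodes \(1,\ldots,r-1\), which is positive.

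You also need to justify that the lowest \(\epsilon\)-order coefficient consists of the \(K^*\) term alone; otherwise terms of equal total order could cancel. Suppose \(\det H_{:,K}\neq0\) for \(K=\{k_1<\cdots<k_r\}\). The independent columns \(k_1,\ldots,k_a\) lie among columns \(0,\ldots,k_a\), whose rank is \(\#\{b:k_b^*\le k_a\}\). So \(k_a\ge k_a^*\) for every \(a\), and \(\sum K>\sum K^*\) unless \(K=K^*\).

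\textbf{Step 3.} The Zariski-density remark is useless for analytic \(F\), and the points \(z_l\) are unnecessary. The real-torus argument works, but only for \(c=i\theta\) with \(\theta\in\mathbb R^d\) rationally independent, not for generic complex \(c\). If some \(\operatorname{Re}c_j\neq0\), the real-\(u\) curve leaves every compact set and its closure is the curve itself. With \(c=i\theta\), the argument runs as follows:
\begin{itemize}
\item \(\sum_ia_ih_i\equiv0\) near \(u=0\) gives vanishing for all real \(u\), since the \(h_i\) are entire.
\item By Kronecker's theorem and continuity, \(F\) then vanishes on the torus \(\{|z^j|=|x^j|\}\).
\item So the entire function \(\zeta\mapsto F(x^1e^{i\zeta_1},\ldots,x^de^{i\zeta_d})\) vanishes on \(\mathbb R^d\), hence identically.
\item This gives \(F\equiv0\) on \((\mathbb C^*)^d\), and thus on \(\mathbb C^d\).
\end{itemize}
With these fixes, one such \((x,c)\) together with a small \(\epsilon\) satisfying \(D(\epsilon)\neq0\) proves the lemma.
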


\begin{proof}
Expand the \(f_i\)'s into their Taylor series at the origin. By invertible row
operations, we may choose a basis of their span with distinct initial
monomials \(x^{\alpha_1},\ldots,x^{\alpha_r}\). Choose a positive weight
vector \(w\) for which these initial monomials remain distinct, and write
\(e_i=\langle w,\alpha_i\rangle\). Let
\(x(s)=(s^{w_1},\ldots,s^{w_d})\). Then, for suitable nonzero coefficients
\(c_i\),
\[
f_i(A^t x(s))
=
c_i(\lambda^{\alpha_i})^t s^{e_i}
+\text{higher-order terms in }s.
\]
Choose the diagonal entries of \(A\) generically so that the numbers
\(\lambda^{\alpha_1},\ldots,\lambda^{\alpha_r}\) are distinct. The lowest
order term of the determinant is then
\[
\left(\prod_{i=1}^r c_i\right)
s^{\sum_i e_i}
\prod_{1\le i<j\le r}
(\lambda^{\alpha_j}-\lambda^{\alpha_i}),
\]
which is nonzero. Hence the determinant is not identically zero, and is
nonzero for some sufficiently small nonzero \(s\).
\end{proof}

We now prove the Abelian claim from the main text. Since \(G\) is finite
Abelian, there is a basis of \(\mathbb C^d\) in which every \(\rho(g)\) is
diagonal. Hence every diagonal matrix \(A\) commutes with the group action.
By assumption, the state coordinates are contained in the span of the feature
coordinates, so there is \(L\) satisfying \(L\Phi(x)=x\). Consequently, every
equivariant linear map \(x\mapsto Ax\) of the above diagonal form belongs to
\(\mathcal F_\Phi^G\).

Fix an active irrep \(\pi\). Since all irreducible complex representations of
a finite Abelian group are one-dimensional, \(d_\pi=1\), and
\(\Phi_\pi(x)\) consists of \(m_\pi\) scalar analytic functions. These
functions are linearly independent because the coordinates of \(\Phi\) are
linearly independent. Applying Lemma~\ref{lem:analytic_vandermonde} with
\(r=m_\pi\) gives a valid equivariant linear trajectory for which
\(\mathbf{\Phi}_{\pi,m_\pi}\) has full rank. Hence this rank is attained
generically. Moreover, the first \(T\) columns of a nonsingular
\(m_\pi\times m_\pi\) design matrix are independent, giving
\[
h_{\pi,\Phi}(T)=\min\{m_\pi,T\}.
\]
Therefore
\[
T_\Phi(G)
=
\max_{\pi:n_\pi>0}m_\pi
=
\max_{\pi:n_\pi>0}\frac{m_\pi}{d_\pi}
=
T_{\mathrm{rep}}(G).
\]

\paragraph{Permutation-equivariant polynomial systems.}
We next prove the upper bound for full polynomial features. The key ingredient
is the following elementary polynomial-orbit lemma.

\begin{lemma}[Polynomial orbit lemma]
\label{lem:polynomial_orbit}
Let \(g_1,\ldots,g_r\in\mathbb C[x_1,\ldots,x_d]\) be linearly independent
polynomials and let \(q\ge2\). Then there exists \(x\in\mathbb C^d\) such
that
\[
\det
\bigl[g_i(x^{\odot q^t})\bigr]_
{i=1,\ldots,r;\,t=0,\ldots,r-1}
\ne0,
\]
where the power is applied coordinatewise.
\end{lemma}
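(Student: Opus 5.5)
The plan is to mimic the proof of Lemma~\ref{lem:analytic_vandermonde}, replacing the diagonal scaling \(A^t x\) by the iterated coordinatewise power \(x\mapsto x^{\odot q^t}\). Under this map a monomial transforms as \((x^{\odot q^t})^\alpha=(x^\alpha)^{q^t}\), so a single monomial evaluated along the orbit produces the sequence \(y, y^{q}, y^{q^2},\dots\) with \(y=x^\alpha\). This is not a geometric progression, so the determinant will not be literally Vandermonde. The first step is to reduce to a convenient basis: by invertible row operations (which only rescale the determinant by a nonzero constant), replace \(g_1,\ldots,g_r\) by a basis of their span with pairwise distinct leading monomials \(x^{\alpha_1},\ldots,x^{\alpha_r}\) with respect to a fixed term order, say a weighted order with a positive integer weight \(w\) that separates all monomials appearing.

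The second step is to substitute \(x=(s^{w_1},\ldots,s^{w_d})\cdot c\) with \(s\) a formal or small parameter, or more simply \(x=(u^{w_1},\ldots,u^{w_d})\) for a single variable \(u\). Then \(g_i(x^{\odot q^t})\) becomes a univariate polynomial in \(u\). Its monomial \(x^\beta\) contributes \(u^{q^t\langle w,\beta\rangle}\). After re-indexing, the entry \((i,t)\) is \(\sum_\beta c_{i,\beta}\, u^{q^t e_\beta}\) with \(e_\beta=\langle w,\beta\rangle\), and \(w\) is chosen so that the \(e_\beta\) are distinct over all monomials in the supports. It then suffices to show that the determinant, as a polynomial in \(u\), is not identically zero. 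Expanding multilinearly in rows, the determinant is \(\sum c_{1,\beta_1}\cdots c_{r,\beta_r}\det[u^{q^t e_{\beta_i}}]\), and each inner determinant is a generalized Vandermonde in the exponents \(q^t e_{\beta_i}\). Such a generalized Vandermonde is nonzero as a polynomial precisely when the \(e_{\beta_i}\) are distinct. Its leading or lowest-degree term is identified by the ordering of exponents \(\sum_t q^t e_{\beta_{\sigma(t)}}\).

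The third step, which I expect to be the main obstacle, is to isolate a single term in \(u\) that survives cancellation. I would take the lowest-order contributions: choose the basis so that the \(\alpha_i\) are the lowest-weight monomials, and order them so that \(e_{\alpha_1}>\cdots>e_{\alpha_r}\). The minimal total exponent \(\sum_t q^t e_{\beta_{\sigma(t)}}\) over all choices of monomials and permutations is then attained uniquely by \(\beta_i=\alpha_i\), with the permutation pairing the largest \(q^t\) with the smallest \(e\) (rearrangement inequality, with strictness from distinctness). Choices using a non-leading monomial \(\beta_i\ne\alpha_i\) have strictly larger \(e_{\beta_i}\) in some row, hence strictly larger exponent. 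Here one must verify carefully that replacing the set \(\{e_{\alpha_i}\}\) by a set with componentwise-at-least-as-large and one strictly larger values strictly increases the minimal rearranged sum. The concern is that distinct combinations \(\sum_t q^t e_t\) might coincide across different choices. This is handled by monotonicity rather than base-\(q\) uniqueness, since the sorted multiset dominates and the weights \(q^t\) are positive. Consequently the coefficient of \(u^{E_{\min}}\) is \(\pm\prod_i c_{i,\alpha_i}\ne0\), so the determinant is a nonzero polynomial in \(u\). Choosing \(u\) outside its finitely many roots gives the desired \(x\).
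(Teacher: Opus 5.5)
Your proposal is correct and follows essentially the same proof as the paper. Both choose a positive integer weight $w$ separating all monomials, row-reduce to distinct lowest-weight monomials, substitute the curve $x=(s^{w_1},\ldots,s^{w_d})$, and use the strict rearrangement inequality to show the lowest power of $s$ arises uniquely from the anti-sorted pairing of the $e_i$ with the $q^t$, with coefficient $\pm\prod_i c_i\neq 0$; your monotonicity argument that any non-lowest monomial in some row strictly raises the exponent (since $q^t>0$) just makes explicit the paper's ``higher-order terms'' step.
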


\begin{proof}
Because the \(g_i\)'s contain only finitely many monomials, choose a positive
integer weight vector \(w\) assigning distinct weights to all monomials
appearing in them. After invertible row operations, we may assume that their
lowest-weight monomials are distinct, say
\(c_i x^{\alpha_i}\), with weights
\(e_1<\cdots<e_r\).

Set \(x(s)=(s^{w_1},\ldots,s^{w_d})\). Then
\[
g_i(x(s)^{\odot q^t})
=
c_i s^{e_iq^t}
+\text{higher-order terms}.
\]
In the determinant expansion, the lowest power of \(s\) is obtained by
pairing the ordered numbers \(e_i\) with the oppositely ordered numbers
\(q^t\). By the strict rearrangement inequality, this minimizing permutation
is unique. Its coefficient is \(\pm\prod_i c_i\ne0\). Hence the determinant
is a nonzero polynomial in \(s\), and is nonzero for some \(s\).
\end{proof}

Now let \(G\) be any subgroup of \(S_d\), acting by coordinate permutations,
and let \(\Phi=\Phi_{\le k}\) with \(k\ge2\). Fix an active irrep \(\pi\), and
write the \(\pi\)-isotypic feature component in terms of its \(m_\pi\)
multiplicity copies as
\[
F_1(x),\ldots,F_{m_\pi}(x)\in V_\pi.
\]
These are linearly independent polynomial equivariant maps.

Choose any nonzero linear functional \(\ell\in V_\pi^*\), and define
\(g_i=\ell\circ F_i\). The polynomials \(g_1,\ldots,g_{m_\pi}\) are linearly
independent. Indeed, if
\(\sum_i a_i g_i=0\), then the equivariant map
\(F=\sum_i a_iF_i\) has image contained in the proper subspace
\(\ker\ell\). If \(F\ne0\), the linear span of its image is a nonzero
\(G\)-invariant subspace of the irreducible space \(V_\pi\), and hence must
equal \(V_\pi\), a contradiction. Thus \(F=0\), and linear independence of
the \(F_i\)'s gives \(a_i=0\).

Consider now the coordinatewise squaring dynamics
\[
x_{t+1}=x_t^{\odot2}.
\]
This map is equivariant under every coordinate-permutation action and belongs
to \(\mathcal F_{\Phi_{\le k}}\) whenever \(k\ge2\). By
Lemma~\ref{lem:polynomial_orbit}, there is \(x_0\) for which
\[
[g_i(x_0),g_i(x_1),\ldots,g_i(x_{m_\pi-1})]_{i=1}^{m_\pi}
\]
has rank \(m_\pi\). This matrix is obtained from
\(\mathbf{\Phi}_{\pi,m_\pi}\) by applying the same functional \(\ell\) to
the \(V_\pi\)-coordinate at every time step. Hence
\(\mathbf{\Phi}_{\pi,m_\pi}\) itself has full row rank. The corresponding
minor is therefore not identically zero in the system parameters and initial
state, so full rank holds generically. We conclude that
\[
T_{\Phi_{\le k}}(G)
\le
\max_{\pi:n_\pi>0}m_\pi.
\]

\paragraph{Quadratic systems with permutation symmetry.}
We finally verify the claims in
Example~\ref{ex:quadratic_permutation}. Let \(V_r\) denote the homogeneous
degree-\(r\) polynomial space. Clearly \(V_0=\pi_0\), while \(V_1\) is the
permutation representation, so
\(V_1=\pi_0\oplus\pi_{\mathrm{std}}\).

For \(V_2\), the diagonal monomials
\(\{x_i^2\}_{i=1}^d\) form another copy of the permutation representation,
while the off-diagonal monomials
\(\{x_ix_j:i<j\}\) form the permutation representation on two-element
subsets. For \(d\ge4\), the latter decomposes as
\(\pi_0\oplus\pi_{\mathrm{std}}\oplus\pi_{(d-2,2)}\). Hence
\[
V_2
=
2\pi_0\oplus2\pi_{\mathrm{std}}
\oplus\pi_{(d-2,2)},
\]
giving \(m_{\pi_0}=4\), \(m_{\pi_{\mathrm{std}}}=3\), and
\(m_{\pi_{(d-2,2)}}=1\).

Only \(\pi_0\) and \(\pi_{\mathrm{std}}\) appear in the state representation.
For the trivial block, \(d_{\pi_0}=1\), so
\(h_{\pi_0,\Phi}(T)\le\min\{4,T\}\). The polynomial-orbit argument above
provides a trajectory attaining rank four at \(T=4\); its first \(T\)
columns are then independent for every \(T\le4\). Thus
\[
h_{\pi_0,\Phi}(T)=\min\{4,T\}.
\]

For the standard block, let
\(P=I-\frac1d\mathbf 1\mathbf 1^\top\) denote projection onto the standard
representation. A convenient basis for its three multiplicity copies is
\[
F_1(x)=Px,\qquad
F_2(x)=P(x^{\odot2}),\qquad
F_3(x)=(\mathbf 1^\top x)Px.
\]
For every fixed \(x\), \(F_3(x)\) is proportional to \(F_1(x)\), so the
one-step rank is at most two. Since \(F_1(x)\) and \(F_2(x)\) are generically
independent, \(h_{\pi_{\mathrm{std}},\Phi}(1)=2\).

It remains to show that two transitions generically give rank three. Use the
squaring dynamics and take
\[
x_0=(1,2,3,0,\ldots,0).
\]
Represent \(V_{\mathrm{std}}\) using coordinate differences relative to the
last coordinate. Selecting the first two such coordinates at time \(0\) and
the first at time \(1\) gives the minor
\[
\begin{pmatrix}
1&2&1\\
1&4&1\\
6&12&14
\end{pmatrix},
\]
whose determinant equals \(16\). Thus the rank is three at \(T=2\), and hence
generically. Therefore
\[
h_{\pi_{\mathrm{std}},\Phi}(T)=\min\{3,2T\}.
\]
The trivial block is consequently the bottleneck and
\(T_{\Phi_{\le2}}(S_d)=4\).

\paragraph{Representation stability for fixed-degree polynomial features.}
We conclude by proving the multiplicity formulas used in the main text.
Let \(p_d(r)\) denote the number of partitions of \(r\) into at most \(d\)
parts, and let \(p(r)\) be the unrestricted partition function.

For the homogeneous space \(V_r\), the multiplicity of the trivial
representation is
\[
\operatorname{mult}_{V_r}(\pi_0)
=
\dim V_r^{S_d}
=
p_d(r),
\]
since an orbit of monomials under coordinate permutations is specified by a
partition of \(r\) with at most \(d\) parts. Hence, for \(d\ge r\),
\[
\operatorname{mult}_{V_r}(\pi_0)=p(r).
\]

To compute the standard multiplicity, use the decomposition of the permutation
representation
\(\mathbb C^d=\pi_0\oplus\pi_{\mathrm{std}}\) and the identity
\[
\mathbb C^d
\cong
\operatorname{Ind}_{S_{d-1}}^{S_d}\mathbf 1.
\]
Frobenius reciprocity gives
\[
\operatorname{mult}_{V_r}(\mathbb C^d)
=
\dim V_r^{S_{d-1}}.
\]
An \(S_{d-1}\)-invariant polynomial may have an arbitrary power on one
distinguished variable and must be symmetric in the remaining \(d-1\)
variables. Therefore
\[
\dim V_r^{S_{d-1}}
=
\sum_{j=0}^r p_{d-1}(j).
\]
Subtracting the trivial multiplicity yields
\[
\operatorname{mult}_{V_r}(\pi_{\mathrm{std}})
=
\sum_{j=0}^r p_{d-1}(j)-p_d(r).
\]
When \(d\ge r+1\), this simplifies to
\[
\operatorname{mult}_{V_r}(\pi_{\mathrm{std}})
=
\sum_{j=0}^{r-1}p(j).
\]

Summing over \(0\le r\le k\), and taking \(d\ge k+1\), gives the stable
multiplicities
\[
m_{\pi_0}
=
\sum_{r=0}^k p(r),
\qquad
m_{\pi_{\mathrm{std}}}
=
\sum_{j=0}^{k-1}(k-j)p(j).
\]
Since the state permutation representation contains only these two irreducible
representations, defining
\[
M_k
\coloneqq
\max\{m_{\pi_0},m_{\pi_{\mathrm{std}}}\}
\]
and applying the preceding polynomial identification bound gives
\[
T_{\Phi_{\le k}}(S_d)\le M_k,
\qquad d\ge k+1.
\]
For fixed \(k\), \(M_k\) is independent of \(d\), proving
\(T_{\Phi_{\le k}}(S_d)=O_k(1)\).

Finally, the classical Hardy--Ramanujan estimate
\[
p(k)\sim
\frac{1}{4k\sqrt3}
\exp\left(\pi\sqrt{\frac{2k}{3}}\right)
\]
also gives the stated dependence on \(k\). Indeed,
\(M_k\ge m_{\pi_0}\ge p(k)\), while monotonicity of \(p(\cdot)\) gives
\(m_{\pi_0}\le(k+1)p(k)\) and
\(m_{\pi_{\mathrm{std}}}\le k^2p(k)\). Hence
\[
M_k
=
\exp\left(
\pi\sqrt{\frac{2k}{3}}+O(\log k)
\right).
\]

\subsection{Proofs for adaptive symmetry discovery}
\label{app:adaptive}

We first record a simple observation showing that it is sufficient to impose
equivariance on a generating set.

\begin{lemma}[Equivariance from generators]
\label{lem:equivariance_generators}
Let \(S\subseteq G\) generate \(G\). A matrix \(W\) satisfies
\[
\rho(s)W=W\rho_\Phi(s)
\qquad
\text{for every }s\in S
\]
if and only if it is \(G\)-equivariant.
\end{lemma}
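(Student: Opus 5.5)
The reverse implication is immediate: a \(G\)-equivariant \(W\) satisfies the intertwining relation for every \(g\in G\), in particular for every \(s\in S\). For the forward implication, I will consider the stabilizer-type set
\[
K\coloneqq\{g\in G:\ \rho(g)W=W\rho_\Phi(g)\}
\]
and show that \(K\) is a subgroup of \(G\). Since \(S\subseteq K\) by hypothesis and \(S\) generates \(G\), this gives \(K=G\).

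The subgroup verification has three checks, each using only that \(\rho\) and \(\rho_\Phi\) are homomorphisms.

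\emph{Identity.} The identity lies in \(K\) because \(\rho(e)=I_d\) and \(\rho_\Phi(e)=I_m\).

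\emph{Products.} If \(g,h\in K\), then
\[
\rho(gh)W=\rho(g)\rho(h)W=\rho(g)W\rho_\Phi(h)=W\rho_\Phi(g)\rho_\Phi(h)=W\rho_\Phi(gh).
\]

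\emph{Inverses.} If \(g\in K\), multiply \(\rho(g)W=W\rho_\Phi(g)\) on the left by \(\rho(g)^{-1}=\rho(g^{-1})\) and on the right by \(\rho_\Phi(g)^{-1}=\rho_\Phi(g^{-1})\). This gives \(W\rho_\Phi(g^{-1})=\rho(g^{-1})W\), so \(g^{-1}\in K\).

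In a finite group, closure under products already suffices, since \(g^{-1}=g^{|G|-1}\). I will still include the inverse computation, because the paper's definition of generation uses words in \(S\) and \(S^{-1}\).

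Equivalently, I can phrase the argument in words. Every \(g\in G\) can be written as \(g=s_1^{\epsilon_1}\cdots s_r^{\epsilon_r}\), and an induction on \(r\) using the product and inverse steps gives the intertwining relation for \(g\). I do not expect a genuine obstacle here. The only point needing care is that the equivariance constraint must be imposed with the same representations \(\rho,\rho_\Phi\) restricted from the ambient group \(\Gamma\). That way, "equivariant on \(S\)" and "\(G\)-equivariant" refer to the same linear conditions, which is exactly what Algorithm~\ref{alg:adaptive} uses.
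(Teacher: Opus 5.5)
Your proof is correct and matches the paper's: the paper also shows that the intertwining relation is preserved under products and inverses, so it holds on all of \(\langle S\rangle=G\). Your version differs only in packaging this as the subgroup \(K\) and in writing out the identity and inverse checks, which the paper asserts without computation.
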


\begin{proof}
Only the forward direction requires proof. If the intertwining identity holds
for \(g,h\in G\), then
\[
\rho(gh)W
=\rho(g)\rho(h)W
=\rho(g)W\rho_\Phi(h)
=W\rho_\Phi(gh).
\]
Moreover, \(\rho(g)W=W\rho_\Phi(g)\) also implies
\(\rho(g^{-1})W=W\rho_\Phi(g^{-1})\). Hence the identity is preserved under
composition and inversion, and therefore holds for every element of
\(\langle S\rangle=G\).
\end{proof}

\begin{proof}[Proof of \cref{thm:adaptive}]
We prove correctness, the trajectory-length guarantee, and computational
efficiency of Algorithm~\ref{alg:adaptive}.

\paragraph{Linear feasibility problem.}
Fix a candidate group \(H\in\mathcal G\) and sampled elements
\(S_H=\{g_1,\ldots,g_N\}\). From the observed trajectory, define
\[
X=[\Phi(x_0),\ldots,\Phi(x_{T-1})],
\qquad
Y=[x_1,\ldots,x_T].
\]
The algorithm asks whether there exists a candidate parameter matrix
\(\widetilde W\in\mathbb C^{d\times m}\) satisfying
\[
\widetilde W X=Y,
\qquad
\rho(g_i)\widetilde W
=
\widetilde W\rho_\Phi(g_i),
\quad i=1,\ldots,N.
\]
These are linear equations in the entries of \(\widetilde W\).

For completeness, letting
\(\widetilde w=\operatorname{vec}(\widetilde W)\), the data constraint becomes
\[
(X^\top\otimes I_d)\widetilde w=\operatorname{vec}(Y),
\]
while each equivariance constraint becomes
\[
\left(
I_m\otimes\rho(g_i)
-
\rho_\Phi(g_i)^\top\otimes I_d
\right)\widetilde w=0.
\]
Thus each candidate group is tested by solving a single finite-dimensional
linear feasibility problem.

\paragraph{Random generators.}
Let \(\mathcal E\) denote the event that, for every candidate
\(H\in\mathcal G\), the sampled multiset \(S_H\) generates \(H\).
By Corollary~\ref{cor:simultaneous_generators} and the choice
\(N=N_{\mathcal G,\delta}\),
\[
\mathbb P(\mathcal E)\ge1-\delta.
\]
We condition on this event for the remainder of the correctness argument.

On \(\mathcal E\), Lemma~\ref{lem:equivariance_generators} implies that the
equivariance constraints imposed for a candidate \(H\) are equivalent to full
\(H\)-equivariance. Hence the feasibility test succeeds for \(H\) exactly when
there exists an \(H\)-equivariant system consistent with the observed
trajectory.

\paragraph{Feasibility of the true group.}
Let \(G\in\mathcal G\) be the full symmetry group of the unknown dynamics
within the ambient group \(\Gamma\), and let \(W\) denote its parameter
matrix. Since the true dynamics are \(G\)-equivariant, \(W\) satisfies all
constraints corresponding to \(G\), and hence \(G\) is feasible.

Moreover, \(T\ge T_\Phi(\mathcal G)\ge T_\Phi(G)\). By
\cref{thm:equivariant_identification}, a generic \(G\)-equivariant system is
uniquely identifiable from the observed trajectory. Therefore \(W\) is the
unique \(G\)-equivariant parameter matrix consistent with the data. In
particular, any feasible solution stored by the algorithm for the candidate
\(G\) is exactly the true parameter matrix \(W\).

\paragraph{Characterization of all feasible candidates.}
Every candidate subgroup \(H\) of \(G\) is feasible, since a
\(G\)-equivariant system is automatically \(H\)-equivariant.

Conversely, suppose a candidate \(H\in\mathcal G\) is feasible. On the event
\(\mathcal E\), feasibility means that there exists an \(H\)-equivariant
system consistent with the trajectory. Since \(\mathcal G\) is generically
separating, Definition~\ref{def:candidate_separation} then implies that \(H\) is a
subgroup of \(G\).

Thus, on \(\mathcal E\), every feasible candidate is a subgroup of the true
group, while \(G\) itself is feasible. Since every proper subgroup of a finite
group has strictly smaller cardinality, \(G\) is the unique feasible candidate
of maximum cardinality. Algorithm~\ref{alg:adaptive} therefore selects \(G\)
and returns the corresponding parameter matrix \(W\).

The same event also guarantees
\(G=\langle S_G\rangle\), so the returned set \(S_G\) is a generating set of
the full symmetry group. Since \(\mathbb P(\mathcal E)\ge1-\delta\), the
algorithm succeeds with probability at least \(1-\delta\).

\paragraph{Trajectory length.}
The preceding argument requires only
\[
T\ge
T_\Phi(\mathcal G)
=
\max_{H\in\mathcal G}T_\Phi(H).
\]
Thus, under generic candidate separation, adaptive symmetry discovery requires
no additional trajectory observations beyond the worst-case trajectory length
needed when the candidate group is known.

\paragraph{Computational complexity.}
Let \(p=dm\). For each candidate group, the unknown vector
\(\widetilde w\) has \(p\) entries. The trajectory contributes \(dT\) scalar
linear equations, while the \(N\) sampled group elements contribute at most
\(Np\) scalar equivariance equations. Thus the complete feasibility system has
at most
\[
dT+Np
\]
equations in \(p\) unknowns.

Using dense linear algebra, feasibility and a solution can therefore be
computed in time polynomial in \(d,m,T\), and \(N\). More explicitly, forming
the equivariance constraints explicitly and applying Gaussian elimination
gives the coarse per-candidate bound
\[
O\bigl(d^3m^2T+Nd^3m^3\bigr),
\]
with polynomial memory complexity. Since
\[
N
=
O\left(
\log|G|_{\max}
+
\log|\mathcal G|
+
\log\frac1\delta
\right),
\]
and the procedure is repeated for \(|\mathcal G|\) candidates, the overall
runtime is polynomial in
\[
d,\;m,\;T,\;|\mathcal G|,\;
\log|G|_{\max},\;\log(1/\delta),
\]
assuming uniform sampling and representation evaluation for each candidate
group can be performed efficiently. This proves the theorem.
\end{proof}

\paragraph{Nested candidate families.}
We next justify the claim that generic candidate separation is automatic for
nested candidate families.

\begin{proposition}[Separation for nested families]
\label{prop:nested_separation}
Suppose the groups in \(\mathcal G\) are totally ordered by inclusion. Then
\(\mathcal G\) is generically separating.
\end{proposition}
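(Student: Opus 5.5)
Fix $G\in\mathcal G$ as the full symmetry group within $\Gamma$ of a generic $G$-equivariant system $W$ with generic initial state $x_0$, and fix any $T\ge T_\Phi(\mathcal G)$. Let $H\in\mathcal G$ be a candidate that admits an $H$-equivariant parameter matrix $\widetilde W$ consistent with the trajectory, i.e.\ $\widetilde W X=Y$. We must show $H\le G$. Since $\mathcal G$ is totally ordered by inclusion, either $H\le G$, in which case we are done, or $G\subsetneq H$. So it suffices to rule out the case $G\subsetneq H$.

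Assume $G\subsetneq H$. Every $H$-equivariant matrix is $G$-equivariant, since the intertwining relation $\rho(g)\widetilde W=\widetilde W\rho_\Phi(g)$ holds for all $g\in H\supseteq G$, with the actions of each candidate obtained by restriction from $\Gamma$. Hence $\widetilde W$ is a $G$-equivariant parameter matrix reproducing the observed transitions. Now $T\ge T_\Phi(\mathcal G)\ge T_\Phi(G)$. By \cref{thm:equivariant_identification}, for generic $(x_0,W)$ in the $G$-equivariant class, each block design matrix $\mathbf{\Phi}_{\pi,T}$ has full row rank, so the $G$-equivariant solution of $\widetilde W X=Y$ is unique. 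Therefore $\widetilde W=W$. But then $W$ is $H$-equivariant, so $H$ lies in the full symmetry group of the system within $\Gamma$, namely $G$. This contradicts $G\subsetneq H$.

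The one point needing care, and the main obstacle, is the genericity bookkeeping. The definition quantifies over generic systems \emph{whose full symmetry group is $G$}, while \cref{thm:equivariant_identification} gives identifiability outside a measure-zero subset of all $G$-equivariant parameters. I would handle this by noting that the exceptional set in \cref{thm:equivariant_identification} is the common zero locus of maximal minors, which are analytic in $(x_0,\{C_\pi\})$ on the linear space of $G$-equivariant matrices. The systems whose full symmetry group is exactly $G$ form the complement in that space of the finite union of proper linear subspaces $\{W: W \text{ is } K\text{-equivariant}\}$ for $G\subsetneq K\le\Gamma$ (assuming these are proper, i.e.\ that such systems exist at all). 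This is a nonempty Zariski-open set. Intersecting it with the complement of the measure-zero exceptional set still leaves a generic set, so "generic with full symmetry group $G$" does imply the uniqueness used above.

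Finally, since the argument holds for every $T\ge T_\Phi(\mathcal G)$ simultaneously, no further genericity conditions arise. Once full row rank holds at $T_\Phi(G)$, adding columns preserves it. This establishes Definition~\ref{def:candidate_separation} for nested families.
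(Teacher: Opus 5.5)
Your proof is correct and follows the paper's argument essentially step for step. Total ordering reduces you to the case $G\subsetneq H$; a consistent $H$-equivariant $\widetilde W$ is then $G$-equivariant; uniqueness at $T\ge T_\Phi(G)$ from \cref{thm:equivariant_identification} forces $\widetilde W=W$; and this contradicts $G$ being the full symmetry group. Beyond the paper, you also show that systems with full symmetry group exactly $G$ form a Zariski-open subset of the $G$-equivariant parameters, and that full row rank persists as $T$ grows, which fills in genericity details the paper leaves implicit.
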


\begin{proof}
Let \(G\in\mathcal G\) be the full symmetry group of a generic system and
suppose \(T\ge T_\Phi(\mathcal G)\). Consider a candidate
\(H\in\mathcal G\) admitting an \(H\)-equivariant system
\(\widetilde W\) consistent with the trajectory.

Since the family is totally ordered by inclusion, either \(H\) is a subgroup
of \(G\), in which case there is nothing to prove, or \(G\) is a proper
subgroup of \(H\). In the latter case, every \(H\)-equivariant system is also
\(G\)-equivariant. Hence \(\widetilde W\) is a \(G\)-equivariant system
consistent with the trajectory. Since
\(T\ge T_\Phi(G)\), \cref{thm:equivariant_identification} implies
\(\widetilde W=W\), where \(W\) is the true parameter matrix. But then the
true system is \(H\)-equivariant, contradicting the assumption that its full
symmetry group is \(G\). Therefore \(H\) must be a subgroup of \(G\).
\end{proof}

\subsection{Proof of \cref{cor:bounded_index}}
\label{app:bounded_index}

\begin{proof}
Let \(H\le\Gamma\) be the full symmetry group of the unknown dynamics, with
\([\Gamma:H]\le B\), and suppose generic elementwise separation holds at the
trajectory length \(T\).

Algorithm~\ref{alg:bounded_index} repeatedly samples
\(g\sim\operatorname{Unif}(\Gamma)\) and accepts \(g\) whenever there exists
\(\widetilde W\) satisfying
\[
\widetilde W X=Y,
\qquad
\rho(g)\widetilde W=\widetilde W\rho_\Phi(g).
\]
If \(g\in H\), the true parameter matrix \(W\) satisfies these constraints.
Conversely, generic elementwise separation implies that feasibility can hold
only if \(g\in H\). Hence the algorithm accepts exactly the samples lying in
\(H\).

It follows that the accepted elements are i.i.d.\ uniform samples from \(H\).
Moreover,
\[
\mathbb P_{g\sim\operatorname{Unif}(\Gamma)}(g\in H)
=
\frac{|H|}{|\Gamma|}
=
\frac{1}{[\Gamma:H]}
\ge\frac1B.
\]
Thus each accepted sample requires at most \(B\) ambient samples in
expectation.

By the random-generation result, the choice
\[
N_{\Gamma,\delta}
=
\left\lceil
8\left(
\left\lceil\log_2|\Gamma|\right\rceil
+\log\frac1\delta
\right)
\right\rceil
\]
ensures that the accepted set \(S\) generates \(H\) with probability at least
\(1-\delta\), since \(|H|\le|\Gamma|\).

Condition on this event. By Lemma~\ref{lem:equivariance_generators}, imposing the
intertwining constraints for all \(g\in S\) is equivalent to imposing
\(H\)-equivariance. Since
\[
T\ge T_\Phi(\mathcal H_B)\ge T_\Phi(H),
\]
\cref{thm:equivariant_identification} implies that the unique
\(H\)-equivariant parameter matrix consistent with the trajectory is the true
matrix \(W\). Therefore Algorithm~\ref{alg:bounded_index} returns the true
dynamics together with a generating set of its full symmetry group.

Finally, obtaining \(N_{\Gamma,\delta}\) accepted elements requires at most
\(BN_{\Gamma,\delta}\) ambient samples and feasibility tests in expectation.
Hence the expected number of tests is
\[
O\left(
B\left(\log|\Gamma|+\log\frac1\delta\right)
\right),
\]
with no dependence on the number of bounded-index subgroups of \(\Gamma\).
Assuming efficient ambient-group sampling, representation evaluation, and
linear feasibility testing, the expected runtime is polynomial in
\[
d,\;m,\;T,\;B,\;\log|\Gamma|,\;\log(1/\delta).
\]
If \(B=O(1)\), the rejection-sampling overhead is constant relative to direct
sampling from \(H\).

The same argument gives a high-probability runtime bound: by a standard
Chernoff bound, \(O(B(N_{\Gamma,\delta}+\log(1/\eta)))\) ambient samples
suffice to obtain \(N_{\Gamma,\delta}\) accepted elements with probability at
least \(1-\eta\).
\end{proof}

\begin{remark}[Cayley-graph interpretation]
The proof only requires that the accepted samples generate the unknown group
\(H\), equivalently that their symmetric Cayley graph is connected. The
Alon--Roichman theorem gives the stronger conclusion that
\(O(\log|H|)\) uniform samples produce an expanding Cayley graph with high
probability. In the bounded-index setting, the feasibility test acts as
rejection sampling from \(\Gamma\), producing exactly uniform elements of the
unknown subgroup \(H\), so the same random-Cayley-graph interpretation
applies.
\end{remark}

\section{Experiments}
\label{sec:experiments}

\begin{table}[t]
\centering
\caption{Dimension of the space of equivariant linear maps
\(\{A\in\mathbb{R}^{d\times d}:\rho(g)A=A\rho(g),\ \forall g\in G\}\)
for several permutation group actions on \(\mathbb{R}^d\), with \(d=10\).}
\label{tab:equivariant_dimensions}
\begin{tabular}{lccc}
\toprule
Group \(G\) &
Generators &
\(\dim(\text{equivariant maps})\) &
Description \\
\midrule
Trivial (no symmetry)
& \(\{e\}\)
& \(d^2=100\)
& All linear maps \\

Single transposition \(C_2=\langle(1\,2)\rangle\)
& one swap
& \(d^2-2d+2=82\)
& Single swap constraint \\

Cyclic shifts \(C_d\)
& one \(d\)-cycle
& \(d=10\)
& Circulant matrices \\

Dihedral group \(D_d\)
& shift + reversal
& \(d/2+1=6\)
& Symmetric circulant \\

Block permutations \(S_5\times S_5\)
& within-block swaps
& \(6\)
& Two exchangeable blocks \\

Full symmetric group \(S_d\)
& adjacent swaps
& \(2\)
& \(\alpha I+\beta\mathbf{1}\mathbf{1}^\top\) \\
\bottomrule
\end{tabular}
\end{table}

In this section, we present a simple proof-of-concept experiment illustrating
the sample-complexity results developed in this paper. Our focus is primarily
theoretical, and the experiment serves as a complementary empirical
illustration of the predicted identification thresholds.

\paragraph{Dimension of equivariant maps for permutation symmetries.}
We consider linear dynamics on \(\mathbb{R}^d\) of the form
\begin{equation}
x_{t+1}=Ax_t,\qquad t=0,\ldots,T-1,
\label{eq:lin_dyn_exp}
\end{equation}
where \(A\in\mathbb{R}^{d\times d}\) is unknown. Let \(G\) be a finite group
acting on \(\mathbb{R}^d\) by coordinate permutations, represented by
permutation matrices \(\rho(g)\). The dynamics is \(G\)-equivariant if
\begin{equation}
\rho(g)A=A\rho(g),\qquad \forall g\in G.
\label{eq:equivariance_lin}
\end{equation}
The matrices satisfying \eqref{eq:equivariance_lin} form the commutant
\[
\mathcal C(G)
\coloneqq
\{A\in\mathbb{R}^{d\times d}:
\rho(g)A=A\rho(g),\ \forall g\in G\}.
\]

For permutation actions, \(\dim(\mathcal C(G))\) has a simple combinatorial
interpretation. Equation~\eqref{eq:equivariance_lin} is equivalent to
\(A_{ij}=A_{g(i)g(j)}\) for every \(g\in G\). Thus, the entries of \(A\)
must be constant on the orbits of the diagonal action of \(G\) on ordered
pairs \((i,j)\). Consequently,
\begin{equation}
\dim(\mathcal C(G))
=
\#\{\text{orbits of \(G\) on }\{1,\ldots,d\}^2\}.
\label{eq:orbit_dim}
\end{equation}

Table~\ref{tab:equivariant_dimensions} gives this dimension for several
standard permutation actions with \(d=10\). For example, under the full
symmetric group \(S_d\), ordered pairs form only two orbits: diagonal pairs
\((i,i)\) and off-diagonal pairs \((i,j)\), \(i\neq j\). Hence
\(\dim(\mathcal C(S_d))=2\), and every \(S_d\)-equivariant matrix has the form
\(\alpha I+\beta\mathbf{1}\mathbf{1}^\top\), for
\(\alpha,\beta\in\mathbb{R}\).

\paragraph{Dimension of the feasible set vs.\ trajectory length.}
Given a trajectory \((x_0,\ldots,x_T)\) generated by
\eqref{eq:lin_dyn_exp}, define
\[
X=[x_0,\ldots,x_{T-1}],
\qquad
Y=[x_1,\ldots,x_T].
\]
The trajectory constraint is \(Y=AX\). Fix an assumed permutation symmetry
group \(H\) and restrict \(A\) to \(\mathcal C(H)\). We study the feasible set
\begin{equation}
\mathcal S_T(H)
\coloneqq
\{A\in\mathcal C(H):Y=AX\}.
\label{eq:feasible_set_def}
\end{equation}
Whenever nonempty, \(\mathcal S_T(H)\) is an affine subspace. We measure its
affine dimension \(\dim(\mathcal S_T(H))\); in particular,
\[
\dim(\mathcal S_T(H))=0
\quad\Longleftrightarrow\quad
\text{the trajectory uniquely identifies \(A\) within \(\mathcal C(H)\)}.
\]

\paragraph{Computing the feasible-set dimension.}
Let \(r=\dim(\mathcal C(H))\), and fix a basis
\(A_1,\ldots,A_r\) of \(\mathcal C(H)\). Writing
\(A=\sum_{i=1}^r\theta_iA_i\), the constraint \(Y=AX\) becomes
\[
\operatorname{vec}(Y)
=
\begin{bmatrix}
\operatorname{vec}(A_1X)&\cdots&\operatorname{vec}(A_rX)
\end{bmatrix}\theta.
\]
When feasible, \(\dim(\mathcal S_T(H))\) is therefore \(r\) minus the rank of
this design matrix.

\paragraph{Experimental setup.}
We set \(d=10\) and draw \(x_0\sim\mathcal N(0,I_d)\). We consider three
matched settings in which the true dynamics matrix \(A\) is drawn from a
continuous distribution on: (i) \(\mathcal C(C_2)\), where
\(C_2=\langle(1\,2)\rangle\); (ii) the unconstrained space
\(\mathbb{R}^{d\times d}\); and (iii) \(\mathcal C(S_d)\). For each setting,
we evaluate \(\dim(\mathcal S_T(H))\) using the corresponding correct symmetry
assumption \(H\), and report the median over independent trials.

These three settings have exact theoretical identification thresholds predicted
by the linear-system result of \cref{sec:equivariant_identification}. For the
trivial group, the state representation contains \(d\) copies of its
one-dimensional irrep, giving \(T_{\mathrm{lin}}=d=10\). For the single
transposition, the state representation consists of \(d-1\) copies of the
trivial representation and one copy of the sign representation, giving
\(T_{\mathrm{lin}}=d-1=9\). Finally, for \(S_d\),
\(\mathbb{R}^d\cong\pi_0\oplus\pi_{\mathrm{std}}\), with both irreducible
components appearing once, and hence \(T_{\mathrm{lin}}=1\).

\paragraph{Observed behavior.}
Figure~\ref{fig:dim} plots \(\dim(\mathcal S_T(H))\) as a function of the
trajectory length \(T\). The feasible-set dimension decreases with \(T\) and
reaches zero exactly when the dynamics becomes uniquely identifiable within
the assumed equivariant class. The empirical transitions agree with the
theoretical predictions: the fully permutation-equivariant system reaches
dimension zero at \(T=1\), the single-transposition system at \(T=9\), and the
unconstrained system at \(T=10\).

The dimensions in Table~\ref{tab:equivariant_dimensions} provide useful
intuition for the reduction in parameter complexity under symmetry, although
the identification threshold is determined more precisely by the
representation multiplicities characterized in
\cref{thm:equivariant_identification}.

\begin{figure}[t]
    \centering
    \includegraphics[width=0.7\linewidth]{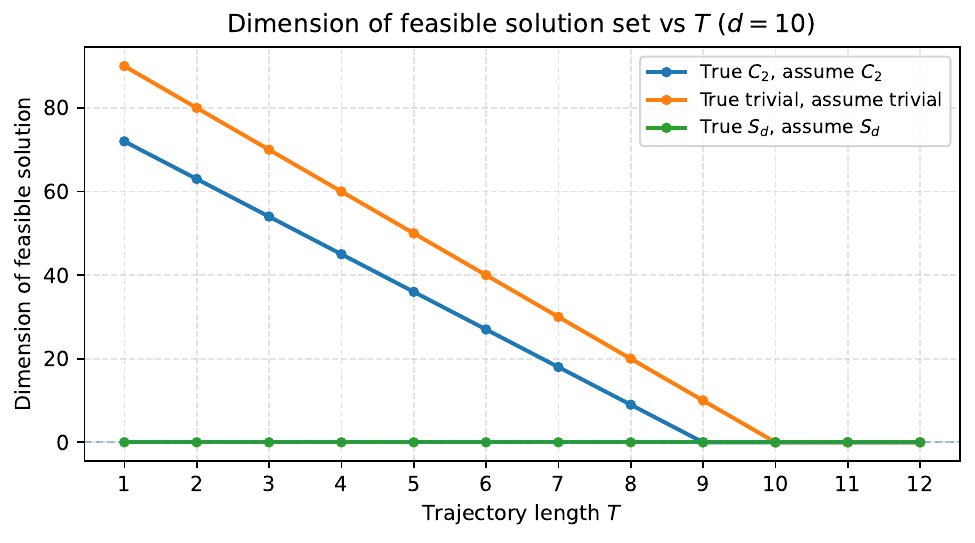}
    \caption{Dimension of the feasible solution set
    \(\mathcal S_T(H)\) as a function of trajectory length \(T\) for the
    correctly matched symmetry classes. The dimension reaches zero at the
    predicted generic identification thresholds \(T=1,9,\) and \(10\) for
    \(S_d\), \(C_2\), and the trivial group, respectively.}
    \label{fig:dim}
\end{figure}



\section{Notation}
\label{app:notation}

We collect the main notation used throughout the paper. 

\begin{table}[h]
\centering
\small
\begin{tabular}{ll}
\toprule
Symbol & Meaning \\
\midrule
\(G\) & Finite group acting on the state and feature spaces \\
\(\widehat G\) & Equivalence classes of irreducible representations of \(G\) \\
\(\pi\in\widehat G\) & An irreducible representation of \(G\) \\
\(V_\pi\) & Representation space of \(\pi\) \\
\(d_\pi\) & Dimension of \(V_\pi\) \\
\(\rho\) & Representation of \(G\) on the state space \(\mathbb C^d\) \\
\(\rho_\Phi\) & Representation of \(G\) on the feature space \(\mathbb C^m\) \\
\(n_\pi\) & Multiplicity of \(\pi\) in the state representation \\
\(m_\pi\) & Multiplicity of \(\pi\) in the feature representation \\
\(\Phi\) & Feature map \(\Phi:\mathbb C^d\to\mathbb C^m\) \\
\(\Phi_{\le k}\) & Full polynomial feature map of degree at most \(k\) \\
\(\mathcal F_\Phi\) & Systems \(f(x)=W\Phi(x)\) associated with \(\Phi\) \\
\(\mathcal F_\Phi^G\) & \(G\)-equivariant systems in \(\mathcal F_\Phi\) \\
\(W\) & Parameter matrix \(W:\mathbb C^m\to\mathbb C^d\) \\
\(C_\pi\) & Multiplicity-space matrix of the \(\pi\)-block of \(W\) \\
\(I_{V_\pi}\) & Identity map on \(V_\pi\) \\
\(x_t\) & State at time \(t\) \\
\(T\) & Trajectory length \\
\(X_\pi(x_t)\) & Matrix form of the \(\pi\)-isotypic component of \(x_t\) \\
\(\Phi_\pi(x_t)\) & Matrix form of the \(\pi\)-isotypic component of \(\Phi(x_t)\) \\
\(\mathbf{\Phi}_{\pi,T}\) &
Block feature matrix \([\Phi_\pi(x_0),\ldots,\Phi_\pi(x_{T-1})]\) \\
\(h_{\pi,\Phi}(T)\) & Generic rank of \(\mathbf{\Phi}_{\pi,T}\) \\
\(T_\Phi(G)\) & Generic identification trajectory length for known \(G\) \\
\(T_{\mathrm{rep}}(G)\) &
Representation-theoretic lower bound on \(T_\Phi(G)\) \\
\(\Gamma\) & Ambient finite group of candidate transformations \\
\(\mathcal G\) & Finite family of candidate groups \\
\(T_\Phi(\mathcal G)\) &
Worst-case threshold \(\max_{G\in\mathcal G}T_\Phi(G)\) \\
\(|G|_{\max}\) & Maximum cardinality of a group in \(\mathcal G\) \\
\(S\) & Set or multiset of sampled group elements \\
\(\langle S\rangle\) & Subgroup generated by \(S\) \\
\(\operatorname{Cay}(G,S)\) & Cayley graph of \(G\) generated by \(S\) \\
\(N_{\mathcal G,\delta}\) &
Number of random samples used per candidate group \\
\(\delta\) & Failure probability \\
\(\mathcal H_B\) & Family of candidate subgroups of index at most \(B\) \\
\([\Gamma:H]\) & Index of a subgroup \(H\) in \(\Gamma\) \\
\(B\) & Upper bound on candidate subgroup index \\
\(M_k\) & Stable maximal active multiplicity for degree-\(k\) features \\
\(p(r)\) & Number of integer partitions of \(r\) \\
\(\mathcal C(G)\) & Commutant of a permutation action in the experiments \\
\(\mathcal S_T(H)\) & Feasible parameter set under assumed symmetry \(H\) \\
\bottomrule
\end{tabular}
\label{tab:notation}
\end{table}


\end{document}